\newenvironment{Proposition}[1]{\par\noindent\textbf{\underline{Proposition}:}\space#1}{}
\newenvironment{Claim}[1]{\par\noindent\textbf{\underline{Claim}:}\space#1}{}
\let\bm\boldsymbol
\DeclareMathOperator*{\argmin}{arg\,min}
\DeclareMathOperator*{\argmax}{arg\,max}
\newtheorem{theorem}{Theorem}
\begin{document}
%
\title{Deep Dictionary Learning: A PARametric NETwork Approach}
%
%
%
%
\author{Shahin~Mahdizadehaghdam,~\IEEEmembership{Member,~IEEE,}
        Ashkan~Panahi,~\IEEEmembership{Member,~IEEE,}
        Hamid~Krim,~\IEEEmembership{Fellow,~IEEE,}
        Liyi~Dai,~\IEEEmembership{Fellow,~IEEE}
\IEEEcompsocitemizethanks{\IEEEcompsocthanksitem S. Mahdizadehaghdam, A. Panahi, and H. Krim are with the Department of Electrical and Computer Engineering, North Carolina State University, Raleigh, NC 27695 USA.\protect\\
E-mail: smahdiz@ncsu.edu; apanahi@ncsu.edu; ahk@ncsu.edu
\IEEEcompsocthanksitem L. Dai is with Army Research Office, RTP, Raleigh, NC 27703 USA.\protect\\
E-mail: liyi.dai.civ@mail.mil
}
}
\IEEEtitleabstractindextext{%
\begin{abstract}
\label{sec:abstract}
Deep dictionary learning seeks multiple dictionaries at different image scales to capture complementary coherent characteristics. We propose a method for learning a hierarchy of synthesis dictionaries with an image classification goal. 
The dictionaries and classification parameters are trained by a classification objective, and the sparse features are extracted by reducing a reconstruction loss in each layer.
The reconstruction objectives in some sense regularize the classification problem and inject source signal information in the extracted features.
The performance of the proposed hierarchical method increases by adding more layers, which consequently makes this model easier to tune and adapt.
The proposed algorithm furthermore, shows remarkably lower fooling rate in presence of adversarial perturbation.
The validation of the proposed approach is based on its classification performance using four benchmark datasets and is compared to a CNN of similar size.
\end{abstract}

\begin{IEEEkeywords}
Image classification, deep learning, sparse representation.
\end{IEEEkeywords}}

\maketitle

\IEEEdisplaynontitleabstractindextext

%
\IEEEpeerreviewmaketitle

\ifCLASSOPTIONcompsoc
\IEEEraisesectionheading{\section{Introduction}\label{sec:introduction}}
\else
\section{Introduction}
\label{sec:introduction}
\fi

%
%
%
%



\IEEEPARstart{T}{he} key step to the complex task of classifying images is that of obtaining features of these images which encompass relevant information, e.g., label information. The two most well-known research directions in this regard are \textit{Deep Neural Networks} and \textit{Dictionary Learning for Sparse Representation}.

\textit{Deep Neural Networks:} In recent years, Deep Neural Networks (DNNs), and more specifically Convolutional Neural Networks (CNNs) \cite{Cun90handwrittendigit, lecun1998gradient}, showed impressive results in many applications, in particular, signal and image processing \cite{dong2016image, ulyanov2016texture}. 
A Convolutional Neural Network consists of multiple layers and a different number of filters in each layer. 
Despite these significant achievements, there is still little theoretical understanding of the learning process in these networks.
Invariant scattering convolution \cite{bruna2013invariant} is among the few works to provide a theoretical perspective of CNN.
This technique specializes the filters in CNN to be fixed wavelet functions. As the wavelet transform is invariant to translation and rotation, the features from the scattering transform are invariant to these transformations as well. 

\textit{Dictionary Learning for Sparse Representation:}
Parsimonious data representation by learning overcomplete dictionaries has shown promising results in a variety of problems such as image denoising \cite{elad2006image}, image restoration \cite{xu2016cloud}, audio processing \cite{grosse2012shift}, and image classification \cite{Zhang2016168}. 
This frame-like representation of each data vector as a linear combination of atoms carries a sparse notion of the associated coefficients. 
Using Sparse Representation-based Classification (SRC) \cite{wright2009robust}, one can represent an image as a combination of a few images in the training dataset.
This is followed subsequently by a classifier based on these feature vectors.
The proposed refinements were task-driven dictionary learning \cite{mairal2012task} and Label Consistent K-SVD (LC-KSVD) \cite{jiang2013label} which jointly learn an overcomplete dictionary, sparse representation, and classification parameters. 

The aforementioned dictionary learning methods are based on entire images for training the dictionary and finding the sparse representation, which can be computationally expensive. 
These potentially lead to a poor performance when the training dataset is small.
Convolutional Neural Networks, however, learn the initial features from small image patches and build a hierarchy of the features at different scales. 
Contrary to conventional wisdom, several experimental studies \cite{he2016deep, he2015convolutional} have reported that deeper neural networks are more difficult to train, and adding more layers, eventually leads to decreased performance. 
Part of the this mentioned problem is due to the vanishing/exploding gradient effect during training.
This problem persists despite mitigations such as batch normalization \cite{he2016deep}.  
To cope with the CNN's fore-noted limitations, and to exploit the deep structure intrinsic to data, we propose a principled hierarchical (deep) dictionary learning to be learned while achieving optimal classification.
Within this framework, the front layer dictionary is learned on small image patches, and the subsequent layer dictionaries are learned on larger scales.
Put simply, the initial scale captures the fine low-level structures comprising the image vectors, while the next scales coherently capture more complex structures.
The classification is ultimately carried out by assembling the final and largest scale features of an image and assessing their contribution. 
In contrast to CNN, we show that the performance of the proposed DDL method improves with additional layers, hence indicating an amenability to tuning, and a better potential for more elaborate learning tasks such as transfer learning.

Inspired by the study in \cite{shwartz2017opening} on neural networks, we show, using an information theoretic argument that DDL is a sensible approach to image classification.
We show that under a certain generative model, DDL maximizes the mutual information $I(\bm{A^*}, \bm{Y})$ between the optimal representation of the observed data/signals and the labels. The optimal representation, $\bm{A^*}$, is obtained by maximizing their mutual information $I(\bm{X}, \bm{A})$ with the input signals.
In the dictionary learning framework, we show that the proposed model simplifies to a joint learning of the dictionary and the classification parameter, which we subsequently generalize into hierarchically/deeply learning the dictionaries over different layers.

On account of the parsimony of our dictionary atom representation of features at each layer, our proposed method exhibits a remarkably lower fooling rate to adversarial perturbation and random noises. In light of the limited performance of the state of the art classifiers as a result of a single additive perturbation \cite{Dezfooli2017, deepfool, goodfellow2014explaining}, the robustness displayed by our proposed algorithm is significant and points to a promise of the approach. Our contributions in this paper are summarized as as follows:

\begin{itemize}[leftmargin=*]
\item{One of the novelties of this paper is in rigorously showing the importance of aiming for a high mutual information between the original signal and the output of each layer. 
This is in fact in agreement with the conclusion of the Residual Network \cite{he2016deep}, where a direct connection of increasing the number of layers in CNN with the drop in performance has been made. 
Our paper establishes that preserving a high fidelity to the input signal (by way of a layer-based LASSO-regularization) is the key to having deeper models with no performance loss in  classification.}
\item{The proposed algorithm is robust to adversarial perturbation and random noises.}
\item{A theoretical discussion on the model-hyperparameter selection is provided. We established a relation between the second moment of the input signal and the necessary width of each layer.}
\end{itemize}

The balance of the paper is organized as follows: 
In Section \ref{sec:prel}, we provide the problem statement as well as some background information of relevance to this paper.
We formulate and propose our new approach in Section \ref{sec:model}. 
The neural networks and the proposed method are compared from an information theoretic point of view in Section \ref{sec:MI}.
The theoretical discussion about the hyper-parameter selection is in Section \ref{sec:Width_of_Network}.
Substantiating experimental results are presented in Section \ref{sec:Results}. Finally, we provide some concluding remarks in Section \ref{sec:conclusions}.

\section{Problem Statement and Preliminaries}
\label{sec:prel}
Image classification is typically based on learning a synthesis dictionary which yields representations of each image as a sparse linear combination of the atoms of the learned dictionary \cite{Mairal:2009:ODL}. 
This is typically followed by a classification technique, such as SVM \cite{hearst1998support}, a neural network, or a linear classifier operating on the sparse feature vectors.
With that goal in mind, one can vectorize all training images into a matrix and perform dictionary learning and sparse representation \cite{mairal2009online}.

Given vectorized images, $\bm{g}_i \in R^m \; i\in \{1,.., n\}$, as a matrix $\bm{G}$, 
the optimal dictionary $\bm{D}^* \in R^{m\times k}$ and
the optimal sparse representations of the images, $\bm{a}^*_i \in R^k$ (the columns of the matrix $\bm{A}^*$), can be be learned by minimizing the reconstruction loss function $\mathcal{L^R}(\bm{D}, \bm{A}, \bm{G})$:
\begin{equation}
\begin{aligned}
&\;\;\;\;\;\;\;\;\;\;\;\;\{\bm{A}^*, \bm{D}^*\}= \argmin_{\substack{\bm{A},\;\bm{D} }}\; \mathcal{L^R}(\bm{D}, \bm{A}, \bm{G}),\\
&\mathcal{L^R}(\bm{D}, \bm{A}, \bm{G}) = \frac{1}{2}||\bm{G} - \bm{D}\bm{A}||_F^2  + \lambda ||\bm{A}||_1 + \lambda^{\prime} ||\bm{A}||^2_F,\\   \;\;\; &\bm{A} \geq 0, \;\; and \;\; \bm{D} \in \mathcal{C},\\
\end{aligned}
\label{eq:dict_learning}
\end{equation}
where $\mathcal{C}$ is the convex set of matrices with unit $L_2$-norm columns. The regularizers of Eqn. (\ref{eq:dict_learning}) are for a sample setting and may vary with the problem at hand.
Different types of regularization may be imposed on the feature vectors for specific task. 
Having the optimal representation $\bm{A}^*$, and the label information of the images $\bm{Y}$, the desired classifier can be trained by minimizing the classification loss function $\mathcal{L^C}(\bm{Y}, \bm{A}^*, \bm{W})$ over the classification parameter $\bm{W}$:
\begin{equation}
\bm{W}^*=\argmin_{\substack{\bm{W}}}\; \mathcal{L^C}(\bm{Y}, \bm{A}^*, \bm{W}).
\label{eq:classification}
\end{equation}

Although the formulation in Eqn. (\ref{eq:dict_learning}) can achieve a very low error in reconstruction of the original image from the extracted sparse features, these  feature vectors are not necessarily optimal for classification purposes.
More generally, studies in recent years have shown that isolating the dictionary learning from classification yields suboptimal dictionaries for classification purposes \cite{mahdizadehaghdam2017image, akhtar2016discriminative}.
Thus, more advanced methods have been proposed to jointly train the dictionary and the classification models. 
These methods, in general, attempt to learn a dictionary for the purpose of classification.
Among these, figure supervised dictionary learning methods \cite{mairal2009supervised}. 
Aside from the subtle differences among the latter methods, they are commonly based on jointly learning classification parameters, a dictionary and sparse representations from a loss function, which is usually a summation of a reconstruction loss (similar to $\mathcal{L^R}$ in Eqn. (\ref{eq:dict_learning})) and a classification loss:
\begin{equation}
\{\bm{W}^*, \bm{D}^*, \bm{A}^*\}=\argmin_{\substack{\bm{W}, \bm{D}, \bm{A}}}\; \mathcal{L^R}(\bm{D}, \bm{A}, \bm{G}) + \mathcal{L^C}(\bm{Y}, \bm{A}, \bm{W}).
\label{eq:joint1}
\end{equation}
Task-driven dictionary learning methods \cite{mairal2012task}, on the other hand, seek the optimum dictionary and classification parameters by minimizing a loss function which is based on only the classification loss. The feature vectors in these cases are, however, conditioned to form optimal sparse representation:
\begin{equation}
\begin{aligned}
\{\bm{W}^*, \bm{D}^*\}=&\argmin_{\substack{\bm{W}, \bm{D}}}\; \mathcal{L^C}(\bm{Y}, \bm{A}^*, \bm{W}),\\
s.t.: \;\;\;\;\;\;\;
\bm{A}^*=&\argmin_{\substack{\bm{A}}}\; \mathcal{L^R}(\bm{D}, \bm{A}, \bm{G}).
\end{aligned}
\label{eq:joint2}
\end{equation}
\begin{figure*}[t]
\centering
	\includegraphics[scale=0.37]
	{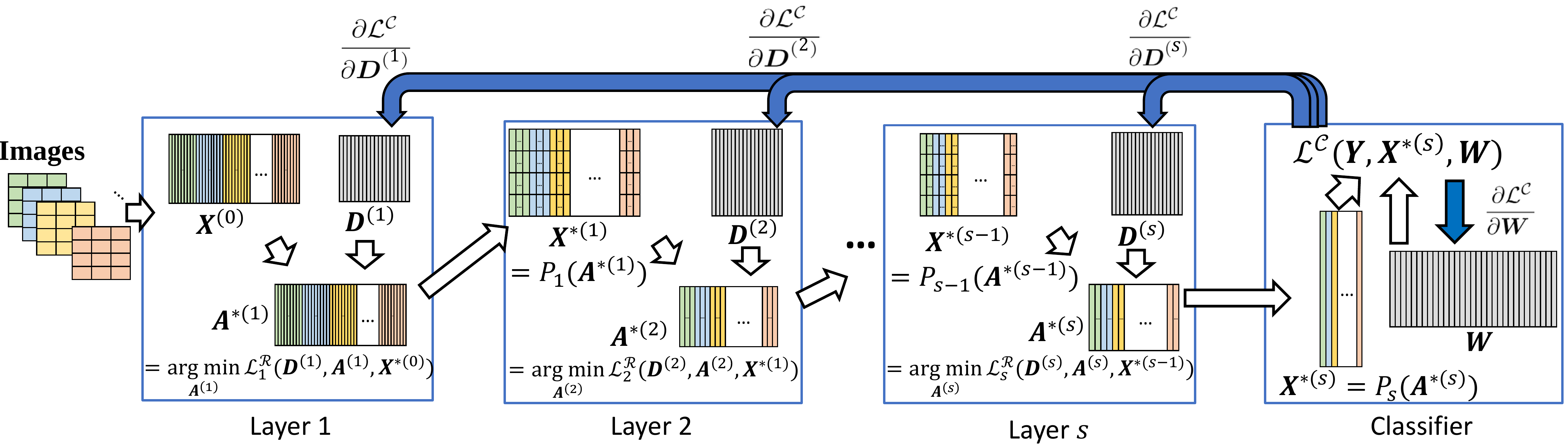}
	\caption{Sequential steps of a deep dictionary with $s$ layers.}
	\label{fig:matrices}
\end{figure*}
Despite the similarities to formulation Eqns. (\ref{eq:classification}) \& (\ref{eq:joint1}), the latter one is relatively more general, achieves a higher performance for large image datasets and is easier to train by a gradient decent approach \cite{mairal2012task}. 

Our work here, is in line with the task-driven dictionary learning methods with an additional set of deep dictionaries to capture structure at different scales \cite{mahdizadehaghdam2017image}.
\section{Proposed Solution}
\label{sec:model}
Our approach to classifying images starts by learning a classification model with input features/coefficients obtained from the last layer of a sequential hierarchy of dictionaries. Specifically, for an $s$-layer hierarchy, the classification parameters, and the dictionaries are learned by minimizing the following classification loss functional:
\begin{equation}
\begin{aligned}
\{\bm{W}^*, \{\bm{D}^{*(r)}\}_{r=1}^s\}&=\argmin_{\substack{\bm{W}, \;  \{\bm{D}^{(r)}\}_{r=1}^s}}\; \mathcal{L^C}(\bm{Y}, \bm{X}^{*(s)}, \bm{W}),
\end{aligned}
\label{eq:Hir_classification}
\end{equation}
where $\bm{W}$, $\bm{D}^{(r)}$, and $\bm{Y}$ are respectively the classification parameters, the dictionary for layer $r$, and the labels of the training images. 
$\bm{X}^{*(s)}$ is the input to the classifier which is calculated by concatenating the output vectors $\bm{A}^{*(s)}$ of the last layer as shortly explained. More generally, $\bm{X}^{*(r)}$ denotes the input to the $(r+1)^{th}$ layer while $\bm{A}^{*(r)}$ refers to the output of the $r^{th}$ layer. These vectors are the result of the following recursive relation:
\begin{equation}
\begin{aligned}
&\;\;\;\;\;\;\;\bm{A}^{*(r)}=\argmin_{\substack{\bm{A}^{(r)} }}\; \mathcal{L}_r^\mathcal{R} (\bm{D}^{(r)}, \bm{A}^{(r)}, \bm{X}^{*(r-1)}),\\
s.&t.:\\
&\mathcal{L}_r^\mathcal{R} (\bm{D}^{(r)}, \bm{A}^{(r)}, \bm{X}^{*(r-1)}) = \frac{1}{2}||\bm{X}^{*(r-1)} - \bm{D}^{(r)}\bm{A}^{(r)}||_F^2 \\
&\;\;\;\;\;\;\;\;\;\;\;\;\;\;\;\;\;\;\;\;\;\;\;\;\;\;\;\;\;\;\;\;\;\;\;\;+ \lambda ||\bm{A}^{(r)}||_1, + \lambda^{\prime} ||\bm{A}^{(r)}||^2_F,\\
&\bm{X}^{*(r)}=P_r (\bm{A}^{*(r)}),\\
&\bm{D}^{(r)} \in \mathcal{C}, \;\; \bm{A}^{(r)} \geq 0,
\label{eq:Hir_dict_learning}
\end{aligned}
\end{equation}
where each column of the matrix $\bm{X}^{*(0)}$ is a vectorized image patch, and $P_r$ is an operator which concatenates the feature vectors of adjacent patches from the previous layer. 
In other words, $P_r$ is an operator which reshapes the output of layer $r$ as the input to layer $(r+1)$.
Eqn. (\ref{eq:Hir_dict_learning}) is similar to an elastic net regularization problem with a non-negativity condition on the feature vectors.
We emphasize that the objective functional in Eqn. (\ref{eq:Hir_classification}) implicitly depends on all dictionaries, since the computation of  $\bm{X}^{*(s)}$ in Eqn. (\ref{eq:Hir_dict_learning}) requires them.

We show in Fig. \ref{fig:matrices} the sequence of computational steps exploiting matrices from Eqn. (\ref{eq:Hir_dict_learning}) and their associated structures. 
The white arrows in the figure depict the forward computing path of the sparse representation of each input layer. 
A set of images are first segmented into small image patches and vectorized into matrix $\bm{X}^{(0)}$ as the input of the first layer.
For a given first layer dictionary $\bm{D}^{(1)}$, the sparse representations of the image patches are learned as the columns of the $\bm{A}^{*(1)}$ matrix, by solving Eqn. (\ref{eq:Hir_dict_learning}) via the sparse encoding algorithm FISTA \cite{beck2009fast}.
The patches of $\bm{A}^{*(1)}$ ($3 \times 3$ window) are next reshaped by operator $P_1$ to yield input $\bm{X}^{*(1)}$ to the second layer. This analysis is sequentially carried on to scale $s$ where $\bm{A}^{*(s)}$ yields $\bm{X}^{*(s)}$ as an input to a classifier (such that each column of this matrix represents an image). The class label information of the images in matrix $\bm{Y}$, together with $\bm{X}^{*(s)}$ provide the classification parameters $\bm{W}$ as a solution to Eqn. (\ref{eq:Hir_classification}).


The blue arrows in Fig. \ref{fig:matrices} highlight the backward training path, reflecting the updates on the dictionaries as a result of the optimized classification, by way of gradient descent on Eqn. (\ref{eq:Hir_classification}).
Optimizing the classification loss $\mathcal{L}^C$ in the forward pass (path of Fig. \ref{fig:matrices}), is followed by updates on the backward pass (path of Fig. \ref{fig:matrices}). 

In summary, the relevant parameter vectors of the images are learned through multiple forward-backward passes through the layers. The dictionaries are updated such that the resulting representations are suitable for the classification. Moreover, the representations are learned by minimizing the reconstruction loss at each layer. 
The search for the optimal classifier is not only carried out by accounting for the deep structure of images, it is additionally regularized by preserving the fidelity of image content representation.
We further discuss the derivation rationale of our purposed approach in Section \ref{sec:MI}.



\subsection{Algorithm}
\label{subsec:alg}
Algorithm \ref{alg}, discusses the details of the updating procedure of the parameters in our method.

\textit{Lines 1-4:}
We randomly initialize the dictionaries and the classification parameter.

\textit{Lines 6-12:}
During the training phase, we randomly select a subset of images, and we construct the representation of the images by sequentially solving the problem in (line 9) and preparing the input for the next layer in (line 10). Our solution to the problem in (line 9) is obtained via the sparse coding algorithm FISTA \cite{beck2009fast}.
In (line 12), we calculate the classification loss for the selected subset of the images.

\textit{Lines 13-15:}
The classification parameter and the dictionaries are updated via the gradient of the computed loss in (lines 13 and 15), respectively. $\eta$ is the step size. 

\begin{algorithm}[!t]
\caption{}\label{alg}
\textbf{Initialization:}
\begin{algorithmic}[1]
\For{$r \text{ \textbf{in} } \{1,s\}$}:
    \State{Initialize $\bm{D}_0^{(r)}$ randomly.}
\EndFor
\State{Initialize $\bm{W}_0$ randomly.}
\algstore{myalg}
\end{algorithmic}
\textbf{Training:}
\begin{algorithmic}[1]
\algrestore{myalg}
\For{$t \text{ \textbf{in} } \{1,T\}$}:
        \Statex {\footnotesize{~}} 
        \Statex {~~~~~~\textbf{\textit{Forward pass}}:} 
        \State{\small{Randomly select a set of images from the training dataset.}}
        \State{\small{Patch and vectorize the images as the columns of $\bm{X}_t^{(0)}$.}}
        \For{$r \text{ \textbf{in} } \{1,s\}$}:
        
            \State{\small{$\bm{A}_t^{*(r)}=\argmin_{\substack{\bm{A}_t^{(r)} }}\; \mathcal{L}_r^\mathcal{R} (\bm{D}_t^{(r)}, \bm{A}_t^{(r)}, \bm{X}_t^{*(r-1)})$}}.
            
            \State{\small{$\bm{X}_t^{*(r)}=P_r (\bm{A}_t^{*(r)})$}}.
        \EndFor
        \State{\small{$\mathcal{L}_t^C=\mathcal{L^C}(\bm{Y}_t, \bm{X}_t^{*(s)}, \bm{W}_t)$}}.
        \Statex {\footnotesize{~}} 
        \Statex {~~~~~~\textbf{\textit{Backward pass}}:} 
        \State{$\bm{W}_{t+1}=\bm{W}_{t}-\eta \frac{\partial \mathcal{L}_t^C}{\partial \bm{W}_{t}}$}.
        \For{$r \text{ \textbf{in} } \{1,s\}$}:
            \State{$\bm{D}^{(r)}_{t+1}=\bm{D}^{(r)}_{t}-\eta \frac{\partial \mathcal{L}_t^C}{\partial \bm{D}^{(r)}_{t}}$}.
        \EndFor
      \EndFor
\end{algorithmic}
\end{algorithm}

\subsection{Optimization by Gradient Descent}
Depending on the choice of the classification functional, computing the gradient of the loss functional with respect to the classification parameter, $\frac{\partial \mathcal{L^C}}{\partial \bm{W}}$, can conceptually be straightforward. 
To ensure an updating step of the stochastic gradient descent for the dictionary at layer $r$, we require $\frac{\partial\mathcal{L^C}}{\partial \bm{D}^{(r)}}$.

\begin{Claim}
\textit{The matrix form of $\frac{\partial\mathcal{L^C}}{\partial \bm{D}^{(r)}}$ is given by,}

\begin{equation}
\begin{aligned}
\frac{\partial\mathcal{L^C}}{\partial \bm{D}^{(r)}} &= - \bm{D}^{(r)} \bm{\beta}^{(r)} \bm{a}^{*(r)T} + (\bar{\bm{x}}^{*(r-1)}-\bm{D}^{(r)}\bm{a}^{*(r)}) \bm{\beta}^{(r)T},\\
\bm{\beta}^{(r)}_{\bm{\Lambda}} &= (\bm{D}^{(r)T}_{\bm{\Lambda}} \bm{D}^{(r)}_{\bm{\Lambda}} + \lambda^{\prime} \bm{I}) ^{-1} \cdot \frac{\partial\mathcal{L^C}}{\partial \bm{a}^{*(r)}_{\bm{\Lambda}}},\\
\bm{\beta}_{\bm{\Lambda}^C}^{(r)}& =0,
\label{eq:opt_cond_vect_diff0}
\end{aligned}
\end{equation}
\textit{where $\bm{\Lambda}$ is the active set of $\bm{a}^*$,  $\bm{\Lambda} \triangleq \{j\;\; | \;\; \bm{a}^*[j] \neq 0, \; \; j \in \{1,..,k\}\}$, $\bm{\Lambda}^C$ is the complement of set $\bm{\Lambda}$, $\bm{I}$ is an identity matrix, $\bm{1}$ is a one-vector, and $\bar{\bm{x}}^{*(r)}$ is an arbitrary column of $\bm{X}^{*(r)}$.}
\end{Claim}

The aforementioned gradient derivation is built on the work in \cite{mairal2012task} for a deep network. The steps of deriving this gradient by the chain rule is as follows,
\begin{equation}
\begin{aligned}
\frac{\partial\mathcal{L^C}}{\partial \bm{D}^{(r)}} &=(\frac{\partial\mathcal{L^C}}{\partial \bar{\bm{x}}^{*(r)}})^T \cdot [\frac{\partial \bar{x}_1^{*(r)}}{\partial \bm{D}^{(r)}}, .., \frac{\partial \bar{x}_m^{*(r)}}{\partial \bm{D}^{(r)}}],\\
\frac{\partial\bar{\bm{x}}^{*(r)}}{\partial \bm{D}^{(r)}} &=
P_r(\frac{\partial\bar{\bm{A}}^{*(r)}}{\partial\bm{D}^{(r)}}),
\end{aligned}
\label{eq:chain_rule_vect}
\end{equation}
where $\bar{\bm{x}}^{*(r)}$ is an arbitrary column of $\bm{X}^{*(r)}$, and $\bar{\bm{A}}^{*(r)}$ are the corresponding output feature vectors from layer $r$ such that $P_r(\bar{\bm{A}}^{*(r)})=\bar{\bm{x}}^{*(r)}$. In addition, $\bar{x}_i^{*(r)}$, is the $i^{th}$ element of the vector $\bar{\bm{x}}^{*(r)}$.
The term $\frac{\partial\bar{\bm{A}}^{*(r)}}{\partial\bm{D}^{(r)}}$, is the gradient of a matrix with respect to a matrix, it is organized as a tensor with rank 4. 
$\frac{\partial\bar{\bm{x}}^{*(r)}}{\partial\bm{D}^{(r)}}$ is also a tensor of rank 3.
Because $\bar{\bm{x}}^{*(r)}$ is obtained as a result of a reshaping operator on $\bar{\bm{A}}^{*(r)}$, the same reshaping operator can be applied to compute the gradient.
The term $\frac{\partial\mathcal{L^C}}{\partial \bar{\bm{x}}^{*(r)}}$ can be computed by applying the chain rule.

In order to compute $\frac{\partial\bar{\bm{A}}^{*(r)}}{\partial\bm{D}^{(r)}}$, we temporarily drop the superscripts for simplicity.
Consider $\bm{d}_j$ as the $j^{th}$ column of the $\bm{D}^{(r)}$ matrix, $\bm{x}$ as a column of $\bm{X}^{*(r-1)}$ matrix, and $\bm{a}$ as the corresponding feature vector.
The vector $\bm{a}^{*}$ is an optimal solution of Eqn. (\ref{eq:Hir_dict_learning}) if and only if, 
\begin{equation}
(\bm{D}^T_{\bm{\Lambda}} \bm{D_\Lambda} + \lambda^{\prime} \bm{I}) \bm{a}^*_{\bm{\Lambda}} = \bm{D}^T_{\bm{\Lambda}} \bm{x} - \lambda \bm{1},
\label{eq:opt_cond_vect}
\end{equation}
We carry out element-wise computation of $\frac{\partial\bm{a}^{*}}{\partial\bm{D}}$,  by taking the gradient of both sides of Eqn. (\ref{eq:opt_cond_vect}) with respect to the elements of the $\bm{D}$ matrix:
\begin{equation}
\frac{\partial\bm{a}^*_{\bm{\Lambda}}}{\partial \bm{d}_{(i,j)}} = (\bm{D}^T_{\bm{\Lambda}} \bm{D_\Lambda} + \lambda^{\prime} \bm{I}) ^{-1} (\frac{\partial \bm{D}^T_{\bm{\Lambda}} \bm{x}}{\partial \bm{d}_{(i,j)}} - \frac{\partial \bm{D}^T_{\bm{\Lambda}} \bm{D}_{\bm{\Lambda}}}{\partial \bm{d}_{(i,j)}} \bm{a}^*_{\bm{\Lambda}}).
\label{eq:opt_cond_vect_diff}
\end{equation}

Further simplifying the equation above and restoring the superscripts, we can write the $\frac{\partial\mathcal{L^C}}{\partial \bm{D}^{(r)}}$ in matrix format as follows,
\begin{equation}
\begin{aligned}
\frac{\partial\mathcal{L^C}}{\partial \bm{D}^{(r)}} &= - \bm{D}^{(r)} \bm{\beta}^{(r)} \bm{a}^{*(r)T} + (\bar{\bm{x}}^{*(r-1)}-\bm{D}^{(r)}\bm{a}^{*(r)}) \bm{\beta}^{(r)T},\\
\bm{\beta}^{(r)}_{\bm{\Lambda}} &= (\bm{D}^{(r)T}_{\bm{\Lambda}} \bm{D}^{(r)}_{\bm{\Lambda}} + \lambda^{\prime} \bm{I}) ^{-1} \cdot \frac{\partial\mathcal{L^C}}{\partial \bm{a}^{*(r)}_{\bm{\Lambda}}},\\
\bm{\beta}_{\bm{\Lambda}^C}^{(r)}& =0.
\label{eq:opt_cond_vect_diff3}
\end{aligned}
\end{equation}
 
\section{An Information Theoretic Perspective of Deep Learning}
\label{sec:MI}
\begin{figure}
\centering
\includegraphics[scale=0.37]
{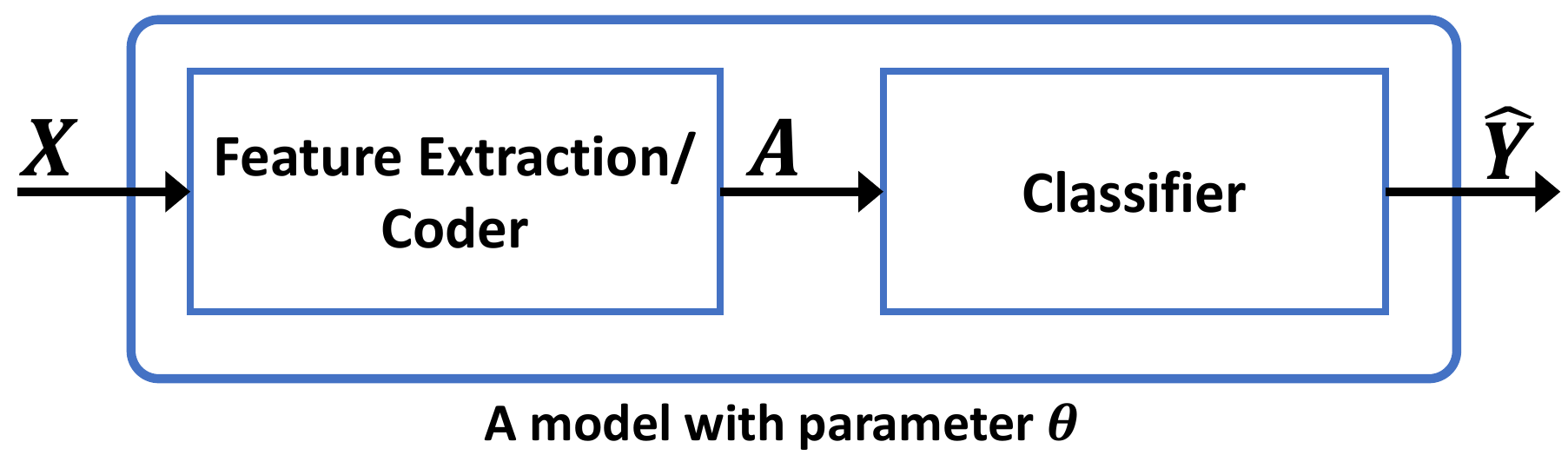}
\caption{$\bm{X}$ and $\bm{A}$ are respectively, the images and the associated feature vectors. $\bm{Y}$ represents the class labels, and $\hat{\bm{Y}}$ is the estimated class labels by the classifier.}
\label{fig:MI}
\end{figure}
The goal of this section is to compare neural network with dictionary learning by the proposed method in an information-theoretic framework.

Consider a setting with $\bm{X}=[\bm{x}_1, .., \bm{x}_n]$ as the original input signals and  $\bm{Y}=[\bm{y}_1, .., \bm{y}_n]$ as the true label information of the original signals (Fig. \ref{fig:MI}).
Starting with the original input signals, the favorable classification model, in this case, is expected to yield the feature vectors $\bm{A}=[\bm{a}_1, .., \bm{a}_n]$, whose mutual information with the labels $\bm{Y}=[\bm{y}_1, .., \bm{y}_n]$ is maximal.
In addition to seeking the maximum mutual information with the labels, our method additionally seeks to attain maximum mutual information between the feature vectors and the original signals.
To this end, and from an information theoretic point of view, we characterize our classifier as follows,
\begin{equation}
\begin{aligned}
\theta^*=\argmax_{\substack{\theta}}&\; I(\bm{A}^{*(s)}, \bm{Y}),\;\;\;\;\;\;\;\;\;\;\;\;\;\;\;\;(P1)\\
st: \bm{A}^{*(r)}= \argmax_{\substack{\bm{A}^{(r)}}}&\; I(\bm{X}^{*(r-1)}, \bm{A}^{(r)}), r \in \{1, .., s\}
\end{aligned}
\label{eq:MI}
\end{equation}
where $I(\bm{X}, \bm{A})$ is the mutual information between $\bm{X}$ and $\bm{A}$, and $\theta$ is the model parameter. 
The optimization problem in Eqn. (\ref{eq:MI}) seeks $\theta$, by maximizing the mutual information between the labels and the optimal feature vectors, $\bm{A}^*$, which by virtue of the constraint, have maximal mutual information with the original signal.

\begin{figure*}[t]
    \centering
    \begin{subfigure}[b]{0.33\textwidth}
        \centering
        \includegraphics[height=1.6in]{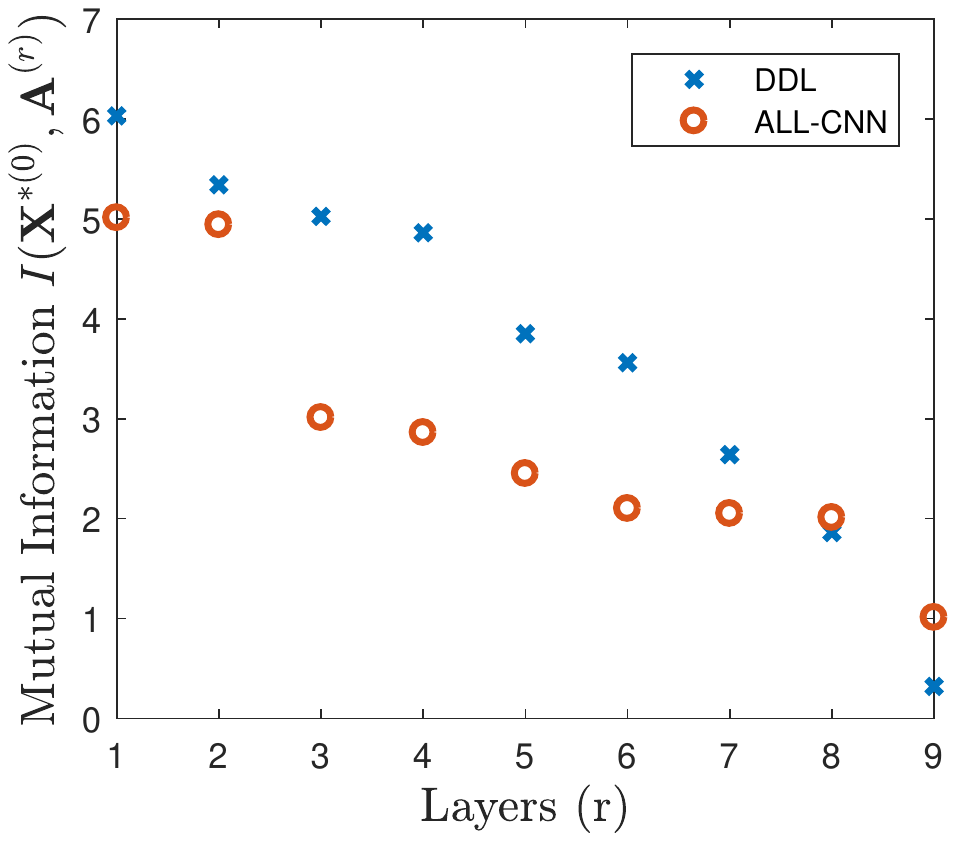}
        \caption{Epoch 1, Iteration 1}
    \end{subfigure}~
    \begin{subfigure}[b]{0.33\textwidth}
        \centering
        \includegraphics[height=1.6in]{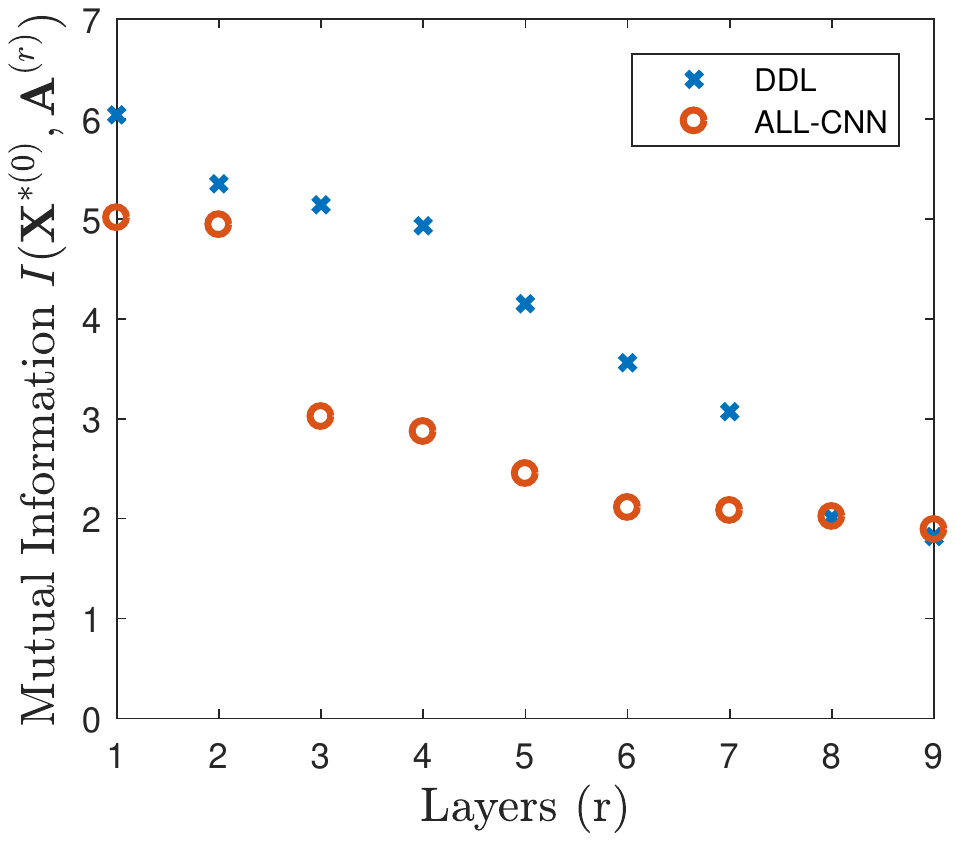}
        \caption{Epoch 1, Iteration 10}
    \end{subfigure}~
    \begin{subfigure}[b]{0.33\textwidth}
        \centering
        \includegraphics[height=1.6in]{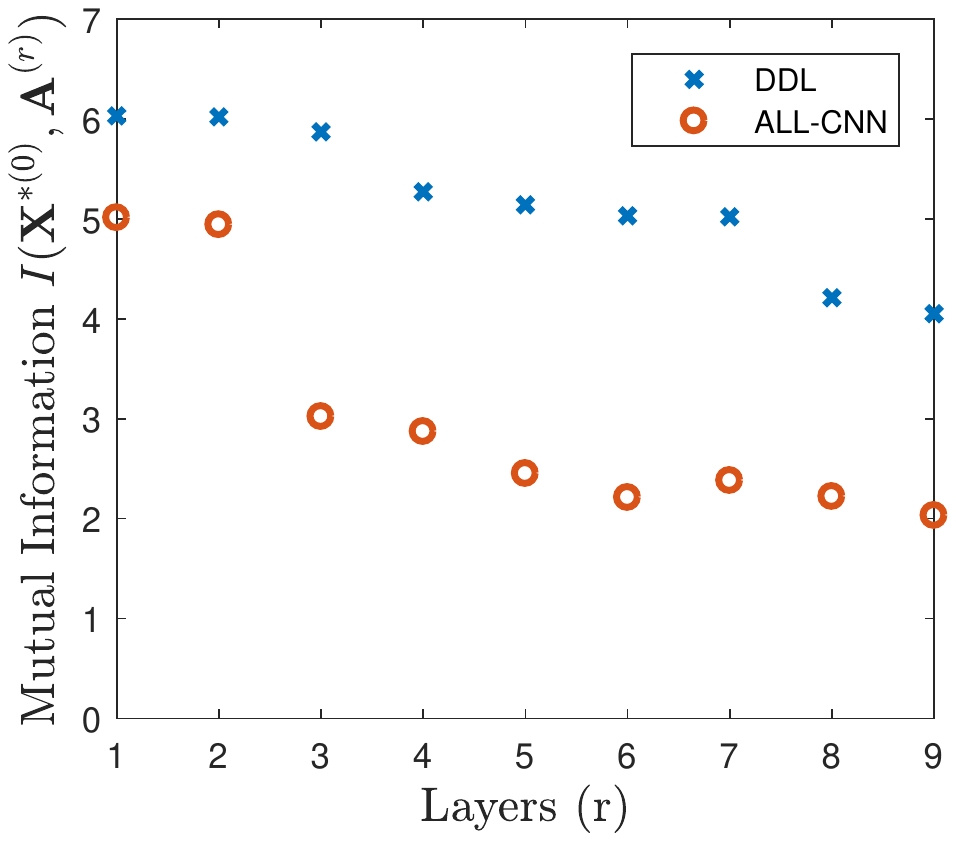}
        \caption{Epoch 25, Iteration 1865}
    \end{subfigure}
    \caption{Mutual informations of output of each layer with the input signal}
    \label{fig:MI_ep}
\end{figure*}
\begin{Proposition}
\textit{Considering the following generative model, we show that the favorable classification model in (\textit{P1}) is approximately equal to a single-layer DL followed by classification by the proposed method:}
\begin{itemize}
  \item \textit{The generated signals} $\{\bm{x}_i\}$ \textit{are independent and follow a Gaussian distribution} $p(\bm{x}_i|\bm{a}_i) \propto e^{-||\bm{x}_i-\bm{D}\bm{a}_i||^2_2}$.
  \item \textit{The features} $\{\bm{a}_i\}$ \textit{are non-negative latent variables with a prior as a compromise between the Gaussian and Laplace priors:} $p(\bm{a}_i) \propto e^{- \lambda ||\bm{a}_i||_1 - \lambda^\prime ||\bm{a}_i||^2_2}$.
\end{itemize}
\end{Proposition}
\begin{proof}
Denoting the Kullback-Leibler divergence measure as $D_{KL}$, $I(\bm{X}, \bm{A})$ can be rewritten as, 
\begin{equation}
\begin{aligned}
I(\bm{X}, \bm{A})=\mathbb{E}_{\bm{x}_i \in \bm{\mathcal{X}}} \Big\{ D_{KL}(p(\bm{a}_i|\bm{x}_i)||p(\bm{a}_i)) \Big\},\\
= \mathbb{E}_{\bm{x}_i \in \bm{\mathcal{X}}} \Big\{ \int_{\bm{a_i}\in \bm{\mathcal{A}}} p(\bm{a}_i|\bm{x}_i) log  \frac{p(\bm{a}_i|\bm{x}_i)}{p(\bm{a}_i)} d\bm{a}_i \Big\},
\end{aligned}
\label{eq:MI2}
\end{equation}
where $\mathbb{E}$ is the expectation operator. Eqn. (\ref{eq:MI2}) can be further simplified by using Bayes' rule as follows,
\begin{equation}
\begin{aligned}
\mathbb{E}_{\bm{x}_i \in \bm{\mathcal{X}}}\Big\{ \int_{\bm{a_i}\in \bm{\mathcal{A}}} \frac{p(\bm{x}_i|\bm{a}_i) p(\bm{a}_i)}{p(\bm{x}_i)} \;\; log \; \frac{p(\bm{x}_i|\bm{a}_i)}{p(\bm{x}_i)} d\bm{a}_i \Big\}.
\end{aligned}
\label{eq:MI3}
\end{equation}
Maximizing Eqn. (\ref{eq:MI3}) with respect to $\bm{A}$ is equivalent to maximizing the following expression,
\begin{equation}
\begin{aligned}
\mathbb{E}_{\bm{x}_i \in \bm{\mathcal{X}}} \Big\{ \int_{\bm{a_i}\in \bm{\mathcal{A}}} p(\bm{x}_i|\bm{a}_i) p(\bm{a}_i)  \;\; log \; p(\bm{x}_i|\bm{a}_i) d\bm{a}_i \Big\}.
\end{aligned}
\label{eq:MI4}
\end{equation}
After some manipulations and maximizing the ``$log$" of the above expression with respect $\bm{A}$, we approximately get the following minimization problem:
\begin{equation}
\begin{aligned}
\argmin_{\{\bm{a}_i\}}\sum_{i=1}^n \Big(||\bm{x}_i - D\bm{a}_i||_2^2 + \lambda||\bm{a}_i||_1 + \lambda^\prime||\bm{a}_i||_2^2 \\ + log ||\bm{x}_i - D\bm{a}_i||_2^2\Big), st: a_i\geq0,
\end{aligned}
\label{eq:MI5}
\end{equation}
which is within an order of $log$ factor different from our formulation in Eqn. (\ref{eq:Hir_dict_learning}). 
\end{proof}

In other words, in contrast to neural networks, preserving a low reconstruction error by minimizing Eqn. (\ref{eq:Hir_dict_learning}) or (\ref{eq:MI5}), leads to higher mutual information between the images and the feature vectors.
We can indeed show that this is due to the smaller conditional entropy of the original signals, $H(\bm{X} | \bm{A})$, in comparison to that achieved by the neural network, $H(\bm{X} | \bm{A}^\prime)$:
\begin{equation}
\begin{aligned}
I(\bm{X},\bm{A})&= H(\bm{X}) - H(\bm{X} | \bm{A}),\\
= H(\bm{X}) - \mathbb{E}_{\bm{a} \in \bm{\mathcal{A}}} \Big\{- &\int_{\bm{x}\in \bm{\mathcal{X}}} p(\bm{x}|\bm{a}) \;\; log \; p(\bm{x}|\bm{a}) d\bm{x} \Big\}.
\end{aligned}
\label{eq:MI6}
\end{equation}
First assume that the conditional probability distribution of the data given the feature vector obtained from our proposed method is $p(\bm{x}|\bm{a}) \propto e^{-||\bm{x}-\bm{D}\bm{a}||_2^2}$ and $p(\bm{x}|\bm{a}^\prime) \propto e^{-||\bm{x}-\varphi^{-1}(\bm{F}^{-1}\bm{a}^\prime)||_2^2}$ for that of the single-layer neural network with a filter $\bm{F}$, and a nonlinear function $\varphi$ such that, $\bm{a}^\prime=\varphi(\bm{F}\bm{x})$. Thus, $H(\bm{X}|\bm{A})$ and $H(\bm{X}|\bm{A}^\prime)$ can be written as,
\begin{equation}
\begin{aligned}
&\footnotesize{H(\bm{X}|\bm{A})=\mathbb{E}_{\bm{a} \in \bm{\mathcal{A}}} \Big\{\int_{\bm{x}\in \bm{\mathcal{X}}} {||\bm{x}-\bm{D}\bm{a}||_2^2} \; e^{-||\bm{x}-\bm{D}\bm{a}||_2^2} \;\; d\bm{x} \Big\},}\\
&\footnotesize{H(\bm{X}|\bm{A}^\prime)}=\\
&\footnotesize{\mathbb{E}_{\bm{a} \in \bm{\mathcal{A}^\prime}} \Big\{\int_{\bm{x}\in \bm{\mathcal{X}}} {||\bm{x}-\varphi^{-1}(\bm{F}^{-1}\bm{a}^\prime)||_2^2} \; e^{-||\bm{x}-\varphi^{-1}(\bm{F}^{-1}\bm{a}^\prime)||_2^2} \;\; d\bm{x} \Big\}}.
\end{aligned}
\label{eq:MI7}
\end{equation}

Neural networks, generally, seek a filter $\bm{F}$ to estimate the ideal classifier $\gamma(\bm{x})$. This may be written as follows,
\begin{equation}
\argmin_{\bm{F}} ||\varphi(\bm{F}\bm{x}) - \gamma(\bm{x})||_2^2.
\label{eq:MI8}
\end{equation}
Hence, the feature vector obtained from a neural network, $\varphi(\bm{F}\bm{x})$, is not optimal to reconstruct the original signal and rather only optimal for classification. 
Our method, however, actively reduces a reconstruction loss (Eqn. (\ref{eq:MI5})) to compute the feature vector.  
Therefore, in Eqn. (\ref{eq:MI7}), the reconstruction loss from a neural network, $||\bm{x}-\varphi^{-1}(\bm{F}^{-1}\bm{a}^\prime)||_2^2$, is larger than $||\bm{x}-\bm{D}\bm{a}||_2^2$. 
This is consistent with experimental results obtained by the authors.

Furthermore, we conducted an experiment to measure the empirical mutual information of the output of each layer with the input signal. 
This experiment compares All-CNN \cite{springenberg2014striving} and a 9-layer DDL over MNIST dataset \cite{lecun1998gradient}.
The mutual information is estimated (from samples \cite{paninski2003estimation}, \cite{kraskov2004estimating}) at different snapshots.
As one can see in Fig. \ref{fig:MI_ep}, our proposed method, over iterations, maintains a higher mutual information between the features and the original signal. This experiment verifies that sequentially conserving the mutual information of input of a layer and output of the layer (Eqn. \ref{eq:MI}), results in an overall high mutual information between the extracted feature vectors and the input signal. 


In summary, this supports the well-known empirical fact about the increasing difficulty of training deeper networks, as the resulting feature vectors become increasingly independent from the original signal (this has also been reported by \cite{shwartz2017opening}). 
Thus, securing the maximum mutual information between the feature vectors and the input signals is important to training deep networks. 
\section{Width of the Network}
\label{sec:Width_of_Network}
A necessary and important step in designing machine learning algorithms is to set/tune the model-hyperparameters. 
In a deep learning framework, the number of filters in each layer is an important model-hyperparameter which in turn, determines the width of each layer, and consequently the fitting capability of the model.
It is always desirable to design a minimal model with a sufficient number of parameters in each layer to learn the true structure of the data. 
In most computer vision problems, the real structure of the data is generally unknown, and picking the number of parameters in each layer is not straightforward. 
An experimentally-proven rule for designing CNN is that, wide layers are good at memorization of the output value for a training dataset, but not so good at generalization in the inference phase. 
While the selection of model-hyperparameters for CNN remains largely heuristic, e.g., random selection search  \cite{bergstra2012random}, or visualization techniques \cite{zeiler2014visualizing}, we provide in this section, some principle in selecting the width of layers in the proposed DDL. Specifically, we propose a rationale for a wider first layer (with a larger number of dictionary atoms) relative to the subsequent layers. 

We show that the second moment of the input signal to each layer is an important factor for selecting the number of dictionary atoms in that layer. 
More specifically, we show that input signals with higher second moments require a larger number of dictionary atoms to keep the reconstruction error small. This consequently, secures a maximal mutual information between the input signal to the layer and the resulting sparse representations. 
Consider a least-squares optimization problem as follows,
\begin{equation}
    \label{eq:WN1}
    \min_{\bm{a}} \frac{1}{2}||\bm{x}-\bm{D}\bm{a}||^2_2 + \lambda ||\bm{a}||_1,
\end{equation}
where $\bm{x}$ is assumed to consist of independent and centered Gaussian entries, with equal variances $\sigma^2$, and the matrix $\bm{D} \in R^{m\times n}$ is a known dictionary. Then, it is desired to characterize the statistical behavior of the optimal solution $\hat{\bm{a}}$ of Eqn. (\ref{eq:WN1}), also called the estimate.
\begin{theorem} 
Considering the least-square optimization problem of Eqn. (\ref{eq:WN1}), the asymptotic value of $E(\hat{\bm{a}}^2)$ is characterized as follows,
\begin{equation}
\label{eq:WN2}
\mathbb{E}[\hat{\bm{a}}^2]= 2(\hat{p}^2+\frac{\lambda^2 \hat{p}^2}{\hat{\beta}^2}) Q(\frac{\lambda}{\hat{\beta}}) - \frac{2 \lambda \hat{p}^2}{\hat{\beta} \sqrt{2 \pi}} exp(-\frac{\lambda^2}{2\hat{\beta}^2}),
\end{equation}
where the function $Q(.)$ is the Gaussian tail Q-function, $\hat{p}$ and $\hat{\beta}$ are the solutions of the following two-dimensional optimization,
\begin{equation}
\label{eq:WN4}
\max_{\beta\geq0}\min_{p>0}\{\frac{p\beta(\gamma-1)}{2}+\frac{\gamma \sigma^2\beta}{2 p} - \frac{\gamma \beta^2}{p} + p F(\beta)\},
\end{equation}
\begin{equation}
\label{eq:WN4}
F(q)=\frac{\lambda e^{-\frac{\lambda^2}{2q^2}}}{2\sqrt{2\pi}}-\frac{q}{2}(1+\frac{\lambda^2}{q^2})Q(\frac{\lambda}{q})+\frac{q}{4},
\end{equation}
and $\gamma=m/n$ determines the ratio of the number of rows to the number of columns (atoms) of the dictionary.
\end{theorem}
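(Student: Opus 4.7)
The formula for $\mathbb{E}[\hat{\bm{a}}^2]$ is precisely what one obtains by soft-thresholding a Gaussian scalar at a level determined by $(\hat p,\hat\beta)$, so the plan is to show that, in the asymptotic regime $m,n\to\infty$ with $m/n\to\gamma$, the LASSO estimate decouples entry-wise into such a scalar problem. The natural tool is Gordon's Convex Gaussian Min-Max Theorem (CGMT). I will assume, consistently with the theorem statement, that $\bm D$ has i.i.d.\ $\mathcal N(0,1/n)$ entries (or, equivalently, that the result is asserted for this random ensemble), so that CGMT applies.

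First, dualize the quadratic loss via
$
\tfrac12\|\bm x-\bm D\bm a\|_2^{2}=\max_{\bm u}\bigl\{\bm u^{T}(\bm x-\bm D\bm a)-\tfrac12\|\bm u\|_2^{2}\bigr\},
$
so that (\ref{eq:WN1}) becomes the primary min-max
$
\min_{\bm a}\max_{\bm u}\bigl\{\bm u^{T}\bm x-\bm u^{T}\bm D\bm a-\tfrac12\|\bm u\|_2^{2}+\lambda\|\bm a\|_1\bigr\},
$
which is convex-concave in $(\bm a,\bm u)$. Apply CGMT: because $\bm D$ is Gaussian and $\bm x$ is independent of $\bm D$, the bilinear term $-\bm u^{T}\bm D\bm a$ can be replaced, for the purpose of computing the optimal value and the law of the minimizer, by $\tfrac{1}{\sqrt n}\|\bm u\|\,\bm g^{T}\bm a-\tfrac{1}{\sqrt n}\|\bm a\|\,\bm h^{T}\bm u$ with independent standard Gaussian $\bm g\in\mathbb R^{n}$, $\bm h\in\mathbb R^{m}$.

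Next, scalarize. Write $p=\|\bm a\|/\sqrt n$ and $\beta=\|\bm u\|/\sqrt n$ and, for fixed $(p,\beta)$, optimize over the directions of $\bm a$ and $\bm u$. The maximization over the direction of $\bm u$ yields $\|\bm u\|\cdot\|\bm x-p\bm h\|/\sqrt n$; since $\bm x$ has i.i.d.\ $\mathcal N(0,\sigma^{2})$ entries and is independent of $\bm h$, standard concentration gives $\tfrac1n\|\bm x-p\bm h\|^{2}\to\sigma^{2}+p^{2}$ up to lower-order terms, which, after completing squares in $\beta$, produces the terms $\tfrac{\gamma\sigma^{2}\beta}{2p}-\tfrac{\gamma\beta^{2}}{p}$ in (\ref{eq:WN4}). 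The minimization over $\bm a$ reduces, entry-wise, to the scalar soft-thresholding problem
\begin{equation*}
\min_{a\ge 0}\Bigl\{\tfrac{\beta}{2p}a^{2}-\beta\, g\, a+\lambda a\Bigr\},
\end{equation*}
so that $\hat a\stackrel d{=}\eta\!\bigl(p\,g;\,\lambda p/\beta\bigr)$ with $g\sim\mathcal N(0,1)$ and $\eta(\cdot,\tau)$ the non-negative soft-thresholder. Taking expectation entry-wise produces the function $F(\beta)$ (this is the Moreau envelope of $\lambda|\cdot|$ at a Gaussian), and the remaining term $p\beta(\gamma-1)/2$ arises from the bookkeeping between $\|\bm u\|^{2}/2$ and the cross term $\|\bm a\|\,\bm h^{T}\bm u/\sqrt n$. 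Collecting everything gives the two-dimensional saddle-point problem (\ref{eq:WN4}), whose saddle point $(\hat p,\hat\beta)$ is unique by strict convex-concavity.

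Finally, compute $\mathbb{E}[\hat{\bm a}^{2}]$ from the scalar representation $\hat a=\eta(\hat p\, g;\,\lambda\hat p/\hat\beta)$. A direct Gaussian integral,
\begin{equation*}
\mathbb{E}\bigl[\eta(\hat p\,g;\tau)^{2}\bigr]=2(\hat p^{2}+\tau^{2})\,Q(\tau/\hat p)-2\hat p\tau\,\phi(\tau/\hat p),
\end{equation*}
with $\tau=\lambda\hat p/\hat\beta$ and $\phi$ the standard normal density, yields exactly (\ref{eq:WN2}). The main obstacle I expect is the CGMT step: verifying the required compactness/convex-concave hypotheses (the $\bm u$ variable is a priori unbounded, so one must localize to a compact ball using the strong convexity in $\bm u$, and the $\bm a$ variable requires a coercive argument from the $\ell_1$ penalty), and then justifying that the scalar saddle point transfers from the auxiliary to the primary problem, so that the distributional identity for $\hat a$ — and hence the second-moment formula — holds asymptotically.
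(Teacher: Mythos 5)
Your proposal follows essentially the same route as the paper: the paper also invokes Gordon's min-max theorem (via the CGMT-based results of Thrampoulidis--Hassibi and Panahi, which it cites rather than re-derives) to reduce the LASSO to a scalar soft-thresholding problem governed by the two-dimensional saddle point in $(\hat p,\hat\beta)$, and then obtains Eqn.~(\ref{eq:WN2}) as the $k=0$ (pure-noise, $\bm a_0=0$) specialization of the general $\mathbb{E}[(\hat{\bm a}-\bm a_0)^2]$ formula. Your final Gaussian integral matches the paper's expression exactly; the only blemish is that you write the scalar problem with an $a\ge 0$ constraint and call $\eta$ the non-negative soft-thresholder while using the two-sided second-moment formula (the factor of $2$ comes from both tails), which is the unconstrained version the theorem actually requires.
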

\begin{proof} See Appendix A. 
\end{proof}

\begin{figure}[t]
\centering
	\includegraphics[width=2.6in]
	{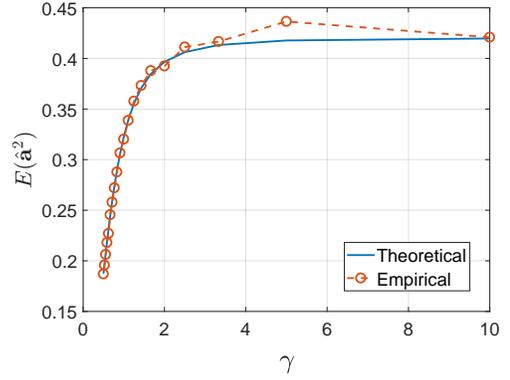}
	\caption{Effect of $\gamma$ on the second moment of the sparse repersentation.}
	\label{fig:NW1}
\end{figure}

Using the above theorem, Fig. \ref{fig:NW1} depicts the value of $\mathbb{E}[\hat{\bm{a}}^2]$  over 50 independent realizations of the LASSO, including
independent Gaussian sensing matrices with $\lambda = 0.5$, and Fig. \ref{fig:NW2} depicts the reconstruction error for different values of $\sigma^2$.
\begin{figure}[t]
\centering
	\includegraphics[width=2.5in]
	{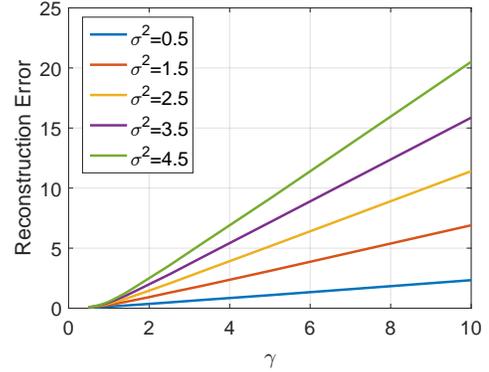}
	\caption{Reconstruction error with varying $\gamma$.}
	\label{fig:NW2}
\end{figure}
As can be seen from Fig. \ref{fig:NW2}, an input with a  higher second moment leads to a higher reconstruction error, and thus amounts to a lower mutual information between the input signal and the sparse code $I(\bm{x}, \bm{a})$. Refer to Eqn. (\ref{eq:MI6}) and Eqn. (\ref{eq:MI7}) for the relation between the reconstruction error and the mutual information.
We can also see from Fig. \ref{fig:NW1}, that decreasing the value of $\gamma$ (increasing the number of dictionary atoms) decreases the second moment of the sparse representation.

In summary, the input signal, $\bm{x}_0$, to the first layer of DDL has a high second moment thus requires a higher number of dictionary atoms (smaller $\gamma$ in Fig. \ref{fig:NW2}) to maintain a low reconstruction error.
Inputs of the next layers of DDL are sparse representations from the output of the previous layers.
With a wide first layer in place, the input to the next layers have smaller second moments (Fig. \ref{fig:NW1}), and a small reconstruction error is assured even when using a smaller number of dictionary atoms.


\section{Experiments}
\label{sec:Results}

Our evaluation of the proposed methodology was carried out on handwritten digits, face images, and object images for recognition tasks using standard datasets. 
The performance of our proposed method is compared to the state-of-the-art methods both in the area of dictionary learning/sparse representation and Convolutional Neural Networks.

Among the dictionary learning/sparse representation methods we compared our model to are the Sparse Representation-based Classification (SRC) \cite{SRC}, Label Consistent K-SVD (LC-KSVD) \cite{jiang2013label}, Discriminative K-SVD (D-KSVD) \cite{zhang2010discriminative}, Locality-constrained Linear Coding (LLC) \cite{LLC}, and task-driven dictionary learning \cite{mairal2012task}.
In addition, we compared our approach to the CNN methods such as invariant Scattering convolution Networks (ScatNet) \cite{bruna2013invariant}, Residual Network (ResNet) \cite{he2016deep}, and Convolutional Kernel Networks (CKN) \cite{mairal2014convolutional}.
Furthermore, to have a fair comparison to CNN methods, we designed deep dictionaries with the same number of parameters and layers as the All-CNN \cite{springenberg2014striving} and performed detailed experiments on both methods.

The classification is based on the features of the last layer of the hierarchy, with a \textit{linear} classifier followed by a ReLU non-linearity ($\varphi(x)=max(0,x)$). 
The classification loss for $n$ images can be written as follows,
\begin{equation}
    \begin{aligned}
    \small{\mathcal{L^C}(\bm{Y}, \bm{X}^{*(s)}, \bm{W})=\frac{1}{n}\big(||\bm{Y}- \varphi(\bm{W} \bm{X}^{*(s)})||_F^2 + \lambda_C||\bm{W}||_F^2\big),}
    \end{aligned}
\end{equation}
where $\bm{Y}$ is a matrix with the class label information of images. The $(j,i)^{th}$ element of this matrix is one if image $i$ belong to Class $j$, and zero otherwise. Columns of $\bm{X}^{*(s)}$ are the inputs to the classifier, and $\bm{W}$ is the classification parameter. We regularize the classification parameter using an $L_2$ norm to prevent overfitting.
To choose the regularization parameters $\lambda$, $\lambda^{\prime}$, and $\lambda_C$, we employed a grid-search on these parameters using 5-fold cross-validation. We increased these parameter values exponentially and selected the set of values which results in the highest performance in the validation set.

\begin{table}[t]
\centering
\caption{Structure of the designed deep dictionaries}
 \begin{tabular}{|c ||c |} 
 \hline
Section 1 & $M_1, 2M_1, 4M_1, M_1$  \\ \hline
Section 2 & $M_2, 2M_2, 4M_2, M_2$ \\ \hline
Section 3 & $M_3, 2M_3, 4M_3, M_3$ \\ \hline
Section 4 & $M_4, 2M_4, 4M_4$ \\ \hline
 \end{tabular}
 \label{tbl:1}
\end{table}
Except for the face recognition experiment, the experiments are carried out on a 15-layer network which can be divided into four sections. 
The overall architecture of the designed deep dictionaries can be found in Table \ref{tbl:1}.
Each row of this table show the number of atoms in a set of layers. For example, assuming $M_1=15$, the first row of the table shows that the first layer dictionary has 15 atoms, and it is followed by dictionaries with 30, 60 and 15 atoms in the next layers. 

Except for the face recognition application, the images are patched into 3 $\times$ 3 atoms with an overlapping stride of 1 in sections one and two, and of stride 2 in sections 3 and 4.
Batch normalization is also used at each layer to prevent very small/very large gradients.

\subsection{Face recognition}
In this regard, we evaluated our proposed algorithm on the Extended YaleB dataset \cite{georghiades2001few}.
This database contains 2,414 face images from 38 individuals. Each individual has about 64 images, and the size
of each image is 192 $\times$ 168 pixels.
Compared to other datasets, images in this dataset are easier to classify. We are therefore, training deep dictionaries with only two layers.
Half of the images per individual are randomly chosen for training, and the other half are used for testing. Due to varying illumination conditions and face expressions, this dataset is a challenging dataset for classification. 

In our approach, the images are partitioned into non-overlapping patches of 24 $\times$ 24 pixels with 200 atoms in the first layer dictionary. Four adjacent patches are concatenated to learn the sparse representations as well as the dictionary in the second layer. The second layer dictionary has 1000 atoms, and the sparse representations have at most 300 non-zero entries. We compare the performance of our method with the state of some art methods in Table \ref{tbl:2}. 
\begin{table}[t]
\centering
\caption{Recognition results on the Extended YaleB dataset}
 \begin{tabular}{|c || c|} 
 \hline
 Method & Accuracy (\%)  \\ [0.5ex] 
 \hline\hline
 SRC \cite{SRC} & 97.2 \\ 
 LLC \cite{LLC} & 90.7 \\
 LC-KSVD \cite{jiang2013label} & 96.7 \\
 \textbf{DDL} & \textbf{99.1} \\ [1ex] 
 \hline
 \end{tabular}
 \label{tbl:2}
\end{table}
Even though the LC-KSVD \cite{LCKSVD} approach is learning discriminative dictionaries via joint classification and dictionary learning, as may be seen from Table \ref{tbl:2}, our approach still registers the highest accuracy. This is due to using the hierarchical approach. On the first layer the elementary details of the images are learned, and the higher level characteristics of the images are learned via the second layer dictionary. 

\subsection{Handwritten digit recognition}
To show the performance of our method on handwritten digit recognition, we use the MNIST dataset \cite{lecun1998gradient}. 
There are totally 70,000 images in this dataset which are divided into 60,000 images for training and 10,000 images for testing. The images in this dataset are of size 28 $\times$ 28.
The number of images in the dataset are very limited, and to avoid overfitting, many state of the art methods learn the features with only a few layers. 
We decided to learn a deep hierarchy of dictionaries to show that the proposed method is not very prone to the overfitting (further  discussed in Subsection \ref{subsec:disc}).

This experiment is carried out on a 15-layer hierarchy of dictionaries with $M_1=10, M_2=20, M_3=30,$ and $M_4=40$.
Table \ref{tbl:3} compares the results of our method with the state of the art methods. As may be seen from this table, our approach registers the highest accuracy. The first group of algorithms, SRC \cite{SRC}, D-KSVD \cite{zhang2010discriminative}, LC-KSVD \cite{jiang2013label}, and task-driven dictionary learning \cite{mairal2012task}, learn the sparse features from the entire image. While, the second group, invariant scattering convolution \cite{bruna2013invariant}, Convolutional Kernel Networks \cite{mairal2014convolutional}, and our proposed method, learn the feature vectors over multiple scales.
Comparing the performance of the first group of works and the second group shows the necessity of learning sparse features over multiple scales. 
\begin{table}[t]
\centering
\caption{Recognition results on the MNIST dataset}
 \begin{tabular}{|c || c|} 
 \hline
 Method & Error percentage (\%)  \\ [0.5ex] 
 \hline\hline
 SRC \cite{SRC} & 4.31 \\ 
 D-KSVD \cite{zhang2010discriminative} & 9.67 \\
 LC-KSVD \cite{jiang2013label} & 7.42 \\
 Task-driven \cite{mairal2012task} & 0.54\\
 Invariant scattering \cite{bruna2013invariant} & 0.43\\
 CKN \cite{mairal2014convolutional} & 0.39\\
 \textbf{DDL} & \textbf{0.32} \\ [1ex] 
 \hline
 \end{tabular}
 \label{tbl:3}
\end{table}

\subsection{Object classification}
For an object classification task, we evaluated our method on two challenging datasets of CIFAR-10 and CIFAR-100.
There are 60,000 color images in CIFAR-10 dataset which are divided into 50,000 training images and 10,000 test images. The size of images in this dataset is 32$\times$32 for a total of in 10 classes.
The images in CIFAR-100 are also 32$\times$32. However, this dataset has 100 classes with 500 training images and 100 testing images per class. 
This experiment is carried out on a 15-layer hierarchy of dictionaries with $M_1=15, M_2=30, M_3=45,$ and $M_4=60$.

Fig. \ref{fig:cifar_acc} shows the learning curve of the proposed  method over the CIFAR-10 dataset, and Table \ref{tbl:4} compares the accuracy of the proposed method with other state of the art methods which have all approximately comparable number of parameters.
\begin{figure}[t]
\centering
\includegraphics[scale=0.7]
{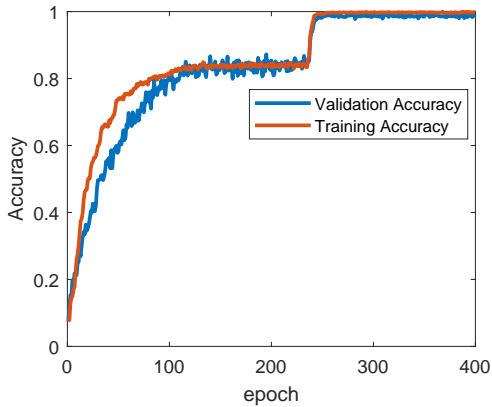}
\caption{Changes in the classification accuracy of the proposed method on the CIFAR-10 dataset over different epochs.}
\label{fig:cifar_acc}
\end{figure}
\begin{table}[t]
\centering
\caption{Recognition accuracy (in percentage) on the CIFAR-10 and CIFAR-100 datasets}
 \begin{tabular}{|c || c|| c|| c |} 
 \hline
 \footnotesize{Method} & \footnotesize{\#Params} & \footnotesize{CIFAR-10} & \footnotesize{CIFAR-100}  \\ [0.1ex] 
 \hline\hline
 \small{All-CNN \cite{springenberg2014striving}} & \small{$\approx$1.4M} & 92.75 & 66.29\\
 \small{CKN \cite{mairal2014convolutional}} & \small{$\approx$0.32M} & 78.30 & - \\
 \small{ResNet \cite{he2016deep}} & \small{$\approx$0.85M} & 93.59 & 72.78\\
 \small{\textbf{DDL 9-layers}} & \small{$\approx$1.4M} & \textbf{93.04} & \textbf{68.76}  \\
 \small{\textbf{DDL 15-layers}} & \small{$\approx$0.76M} & \textbf{94.17} & \textbf{80.62}  \\ [0.1ex] 
 \hline
 \end{tabular}
 \label{tbl:4}
\end{table}
As may be seen from the table, our proposed method obtains a higher accuracy in both datasets with a smaller number of training parameters. 
\begin{figure}[b]
    \centering
    \begin{subfigure}[b]{0.5\textwidth}
        \centering
        \includegraphics[height=0.9in]{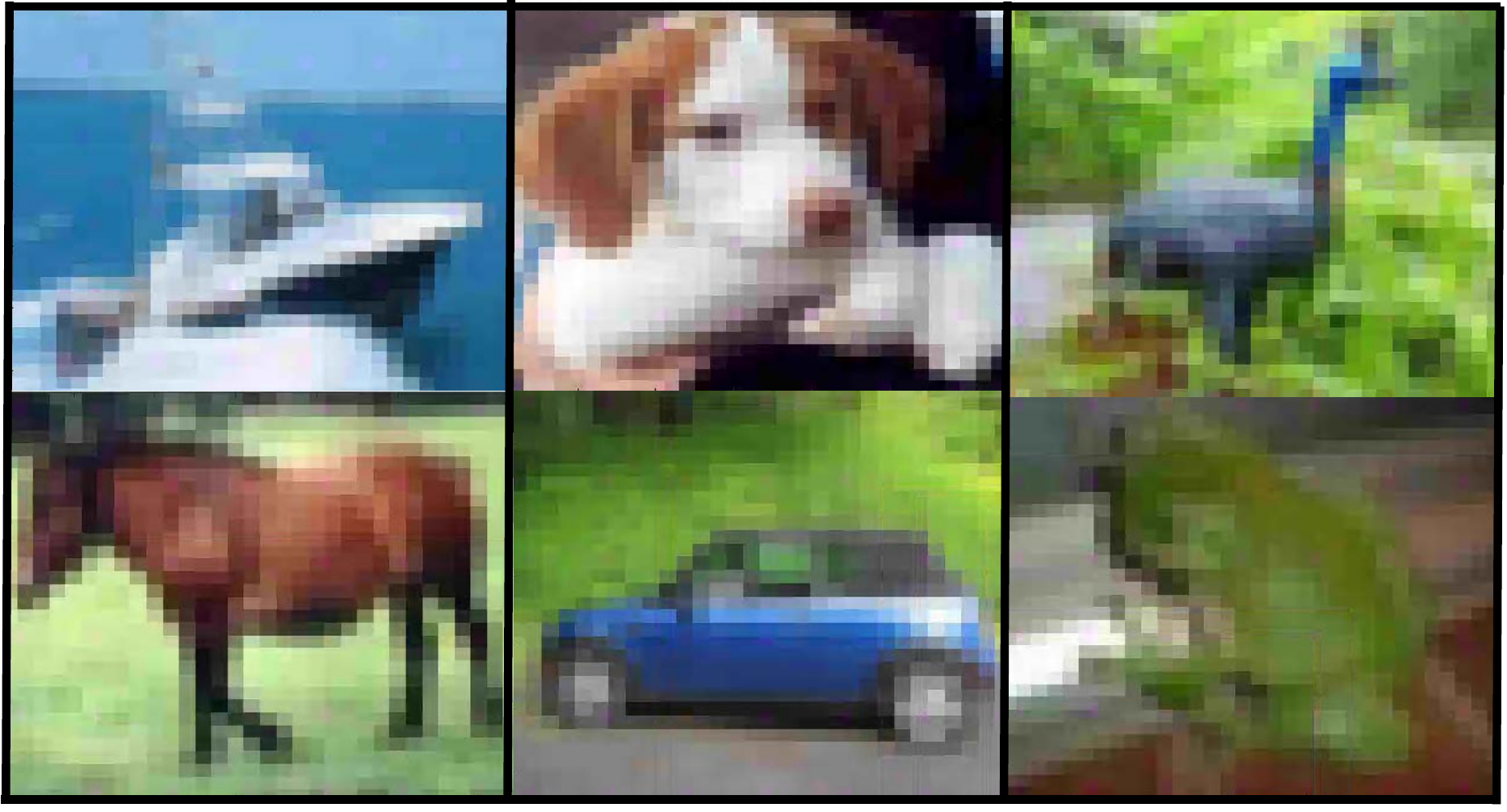}
        \caption{Original images}
    \end{subfigure}
    \begin{subfigure}[b]{0.5\textwidth}
        \centering
        \includegraphics[height=0.9in]{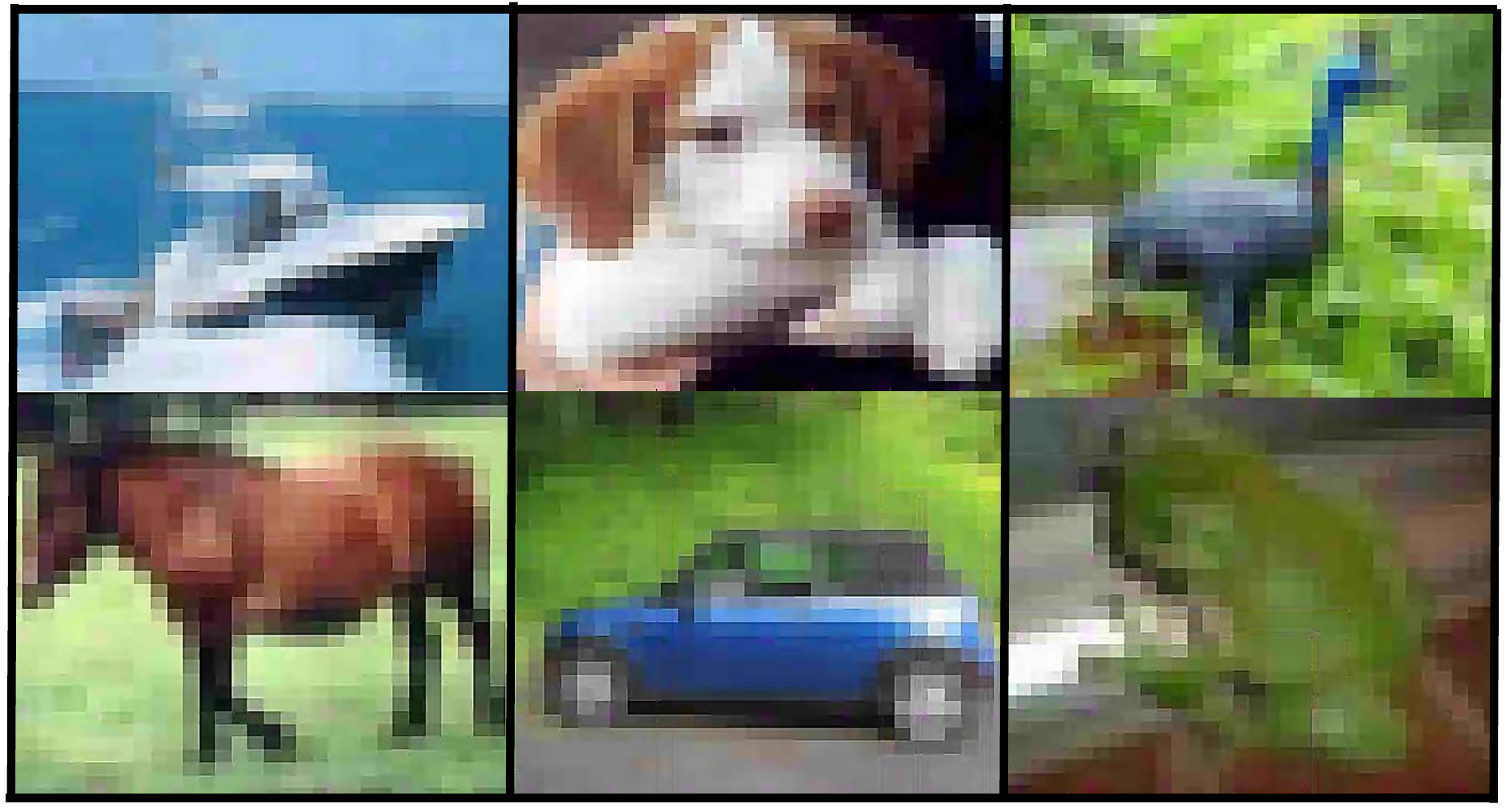}
        \caption{Reconstructed images}
    \end{subfigure}
    \caption{comparison between the reconstructed images form DDL and the original images form CIFAR-10 dataset}
    \label{fig:recon}
\end{figure}
To have a fair comparison with classical CNN, and be independent of the number of layers and parameters, the $4^{th}$ row of the table shows the accuracy of the proposed method in a 9-layer design (and a similar architecture to the All-CNN  \cite{springenberg2014striving}). As may be seen, with the same number of parameters, the deep dictionary achieves a higher accuracy. The lower accuracy of the All-CNN is primarily due to the difficulty in tuning deeper networks rather than their intrinsic capacity. Fig. \ref{fig:recon} displays a subset of reconstructed images via DDL and their corresponding original images  in CIFAR-10. The reconstructed images are obtained by sequentially reshaping and multiplying the feature vectors to each layer's dictionary. As may be seen from this figure, the reconstructed images are very similar to the original images, which is due to the effect of minimizing the reconstruction loss while constructing the feature vectors and preserving maximum mutual information with the original images.

\subsection{Robustness to adversarial perturbations}
\label{subsec:noise}
Many researchers have recently reported the vulnerability of state-of-the-art deep learning techniques to adversarial perturbations \cite{Dezfooli2017, deepfool, goodfellow2014explaining}. 
These studies have shown that one can find a single small additive image perturbation to fool deep learning algorithms. The additive adversarial perturbation $\bm{v}$ (based on the definition in \cite{Dezfooli2017, deepfool}) should satisfy the following constraint,
$||\bm{v}||_2 ~ \leq ~\rho ~ \mathbb{E}_{\bm{x}} ||\bm{x}||_2$.
This constraint controls the magnitude of the additive noise, where $\rho$ is usually a small number, and $\mathbb{E}_{\bm{x}}$ is the expected value of the magnitude of images. Upon inducing noise, the fooling rate can be calculated as, $\mathbb{P}_{\bm{x}}(\hat{\theta}(\bm{x}+\bm{v})\neq\hat{\theta}(\bm{x}))$, where $\hat{\theta}(\bm{x})$ is the estimated label for image $\bm{x}$.

To investigate the robustness of our algorithm, we used the algorithm in \cite{Dezfooli2017} with $\rho=0.04$ to compare the fooling rate of our proposed algorithm in presence of an adversarial perturbation (Table \ref{tbl:5}). Our proposed algorithm displays great resilience with a much lower fooling rate in comparison to the other algorithms (DenseNet \cite{resnet}, VGG-19 \cite{vgg}, and RES Net \cite{resnet}).
\begin{table}[t]
\centering
\caption{Fooling rate (in percentage) on the CIFAR-10 dataset}
 \begin{tabular}{|c || c|} 
 \hline
 \footnotesize{Classifier} & \footnotesize{Fooling rate} \\ [0.1ex] 
 \hline\hline
 VGG-19 \cite{vgg} & 0.67\\
 RES Net-101 \cite{resnet} & 0.84\\
 Dense Net  \cite{densenet} & 0.77\\
 \textbf{DDL}  & \textbf{0.09}\\
[0.1ex] 
 \hline
 \end{tabular}
 \label{tbl:5}
\end{table}
We further study the robustness of our algorithm, by adding a single random noise to the images. Figs. \ref{fig:fooling_mnist} and \ref{fig:fooling_cifar} compare the robustness of our algorithm to the state of the art deep learning classifiers in presence of random additive noise in MNIST and CIFAR-10 datasets respectively.
\begin{figure}[t]
\centering
\includegraphics[scale=0.5]{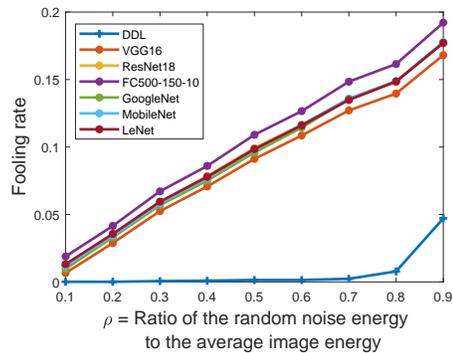}
\caption{Fooling rate in MNIST dataset}
\label{fig:fooling_mnist}
\end{figure}
\begin{figure}[t]
\centering
\includegraphics[scale=0.5]{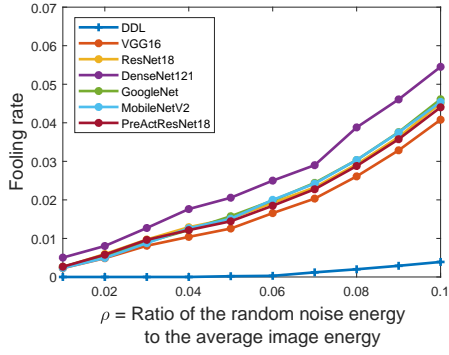}
\caption{Fooling rate in CIFAR-10 dataset}
\label{fig:fooling_cifar}
\end{figure}
In Convolutional Neural Networks the additive noise propagates through the layers. 
In contrast to CNN, the image features in our proposed algorithm are represented by parsimoniously selecting a minimum number of basis vectors (dictionary atoms). This improves the robustness of our proposed approach and its tolerance to adversarial perturbations.

\subsection{Generalizability to deeper networks}
\label{subsec:disc}
\begin{figure}[t]
\centering
\includegraphics[scale=0.7]
{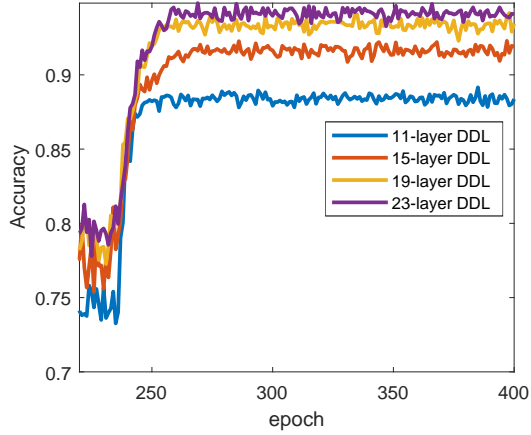}
\caption{Learning curves of the proposed method with different number of layers on the CIFAR-10 dataset.}
\label{fig:cifar_layers}
\end{figure}

In this section, we trained 4 different networks with a different number of layers to test their performance on the CIFAR-10 dataset. The design of the 15-layer network is shown in Table. \ref{tbl:1}. The 11-layer network is similar to the 15-layer network but trimmed at the 11$^{th}$ layer. The 19-layer network is built by putting the $M_4, M_4, 2M_4$, and $4M_4$ layers at the end of the 15-layer network, and the 23-layer is similarly, built by repeating the $M_4, M_4, 2M_4$, and $4M_4$ layers at the end of the 19-layer network. 
As Fig. \ref{fig:cifar_layers} shows, the accuracy of our proposed approach increases as the structure depth increases. 
By adding more layers, the extracted feature vectors are maintaining high mutual information with the original signal thereby facilitating the training of the deep dictionaries in contrast to the Convolutional Neural Networks, and making this method a better candidate for more elaborate learning tasks such as transfer learning. 
\section{Conclusions} 
\label{sec:conclusions}
In this paper, we used four image datasets to evaluate the classification performance of our proposed deep dictionary learning method. 
We demonstrated the importance of representing images by learning image characteristics at multiple scales via deep dictionaries.
The importance of preserving the maximum mutual information between the feature vectors and the input signals is discussed from an information theoretic perspective, and is tested empirically.
We also showed that refining the dictionary learning and feature selection by accounting for the target task improves performance.
The evaluation results show the merit of the proposed method for classifying  images.
\appendices
\section{Proof of Theorem 1:}
Consider the following regularized least-square optimization problem,
\begin{equation}
    \label{eq_pr1}
    \min_{\bm{a}} \frac{1}{2} ||\bm{z}-\bm{D}\bm{a}||_2^2 + f(\bm{a}),
\end{equation}
where matrix $\bm{D}\in R^{m\times n}$ is the dictionary matrix, $\bm{z}\in R^m$ and $\bm{a}\in R^n$ respectively are the signal and the coefficient in the dictionary of atoms, and $f(\bm{a})$ is a real and convex function.
Understanding the asymptotic behaviour of the solution of the regularized least squares problem where $m$ and $n$ grow to infinity with a constant ratio $\gamma=m/n$ is an interesting case.
A scenario which is widely considered in the literature is when $\bm{z}$ is generated by a linear model as follows,
\begin{equation}
    \label{eq_pr2}
    \bm{z}=\bm{D}\bm{a}_0 + \bm{v},
\end{equation}
where $\bm{a}_0$ is the true structured vector and $\bm{v}$ is a noise vector with centered Gaussian entries, with equal variance $\sigma^2$.
Using Gordon's min-max Theorem (\cite{Gordon88}, Lemma 3.1 ) the authors in \cite{hasibi2015} showed that the empirical distribution of $\hat{\bm{a}}$ converges to that of $\hat{\bm{A}}$,
\begin{equation}
    \label{eq_pr3}
    \hat{\bm{A}}=\argmin_{\bm{a}} \frac{\hat{\beta}}{2\hat{p}}(\bm{a}-\bm{a}_0+\hat{p}\Gamma)^2 + f(\bm{a}),
\end{equation}
where $\Gamma$ is a standard Gaussian vector and ($\hat{\beta}$, $\hat{p}$) are the results of the following optimization,
\begin{equation}
    \label{eq_pr4}
     arg \max_{\beta\geq0} \min_{p>0} \{\frac{p\beta(\gamma-1)}{2} + \frac{\gamma\sigma^2 \beta}{2p} - \frac{\gamma\beta^2}{2} + \mathbb{E}[(S_f(\frac{\beta}{p},p\Gamma+A))] \}.
\end{equation}
Further, $S_f(.,.)$ denotes the proximal function of $f$ which is defined as,
\begin{equation}
    \label{eq_pr5}
     S_f(q,y)= \min_x \frac{q}{2} (x-y)^2 +f(x).
\end{equation}
For $f(a)=\lambda||a||_1$, the solution to Eqn. (\ref{eq_pr4}) was derived for LASSO and is given in Eqn. (\ref{eq_pr6}) \cite{panahi2017universal}.
\begin{equation}
    \label{eq_pr6}
     \mathbb{E}[(S_f(\frac{\beta}{p},p\Gamma+A))]=k\sqrt[]{1+p^2}F(\frac{\beta}{p}\sqrt[]{1+p^2})+(1-k)pF(\beta),
\end{equation}
where 
\begin{equation}
    \label{eq_pr7}
     F(q)=\frac{\lambda e^{-\frac{\lambda^2}{2q^2}}}{2\sqrt[]{2\pi}}-\frac{q}{2}(1+\frac{\lambda^2}{q^2}) Q(\frac{\lambda}{q}) + \frac{q}{4}.
\end{equation}
The entries of the true vector $\bm{a}_0$ are assumed to be non-zero and standard Gaussian with probability $0 \leq k \leq 1$. In other words, the distribution of a $\bm{a}_0$ is $\xi = k \mathcal{N}+ (1-k)\sigma_0$, where $\mathcal{N}$ and $\sigma_0$ are standard Gaussian and the Dirac measures on R, respectively.

The function $Q(.)$ is a Gaussian tail Q-function. Replacing the above expression in Eqn. (\ref{eq_pr4}), we can obtain $\hat{p}$ and $\hat{\beta}$. The asymptotic value of $\mathbb{E}[(\hat{\bm{a}}-\bm{a}_0)^2]$ can thus be obtained as following term:
\begin{equation}
    \label{eq_pr8}
     \mathbb{E}[(\hat{\bm{a}}-\bm{a}_0)^2]=kJ(\frac{\lambda p}{\beta},p,1)+(1-k) J(\frac{\lambda p}{\beta},p,0),
\end{equation}
where
\begin{equation}\label{eq_pr9}
\begin{split}
     J(\epsilon,p,\alpha)=&\alpha^2 +2(p^2 + \epsilon^2 - \alpha^2) Q(\frac{\epsilon}{\sqrt[]{\alpha^2+p^2}})-\\
      &- 2\epsilon \; \sqrt[]{\frac{\alpha^2+p^2}{2\pi}} exp(-\frac{\epsilon^2}{2(\alpha^2+p^2)}).
\end{split}
\end{equation}

Upon setting $k=0$ in the above equation and some simplifications, we can finally calculate $\mathbb{E}[\hat{\bm{a}}^2]$ as Eqn. (\ref{eq:WN2}).

\ifCLASSOPTIONcompsoc
  \section*{Acknowledgments}
\else
  \section*{Acknowledgment}
\fi
We would like to acknowledge the support of U.S. Army Research Office: Grant  W911NF-16-2-0005 and we thank Dr. Michael Bronstein for suggesting to consider a comparative evaluation under adversarial perturbation.

\ifCLASSOPTIONcaptionsoff
  \newpage
\fi



%

\bibliographystyle{ieee.bst}
\bibliography{main.bib}

\begin{thebibliography}{10}\itemsep=-1pt

\bibitem{akhtar2016discriminative}
N.~Akhtar, F.~Shafait, and A.~Mian.
\newblock Discriminative bayesian dictionary learning for classification.
\newblock {\em IEEE transactions on pattern analysis and machine intelligence},
  38(12):2374--2388, 2016.

\bibitem{beck2009fast}
A.~Beck and M.~Teboulle.
\newblock A fast iterative shrinkage-thresholding algorithm for linear inverse
  problems.
\newblock {\em SIAM journal on imaging sciences}, 2(1):183--202, 2009.

\bibitem{bergstra2012random}
J.~Bergstra and Y.~Bengio.
\newblock Random search for hyper-parameter optimization.
\newblock {\em Journal of Machine Learning Research}, 13(Feb):281--305, 2012.

\bibitem{bruna2013invariant}
J.~Bruna and S.~Mallat.
\newblock Invariant scattering convolution networks.
\newblock {\em IEEE transactions on pattern analysis and machine intelligence},
  35(8):1872--1886, 2013.

\bibitem{Cun90handwrittendigit}
L.~Cun, B.~Boser, J.~S. Denker, D.~Henderson, R.~E. Howard, W.~Hubbard, and
  L.~D. Jackel.
\newblock Handwritten digit recognition with a back-propagation network.
\newblock In {\em Advances in Neural Information Processing Systems}, pages
  396--404. Morgan Kaufmann, 1990.

\bibitem{dong2016image}
C.~Dong, C.~C. Loy, K.~He, and X.~Tang.
\newblock Image super-resolution using deep convolutional networks.
\newblock {\em IEEE transactions on pattern analysis and machine intelligence},
  38(2):295--307, 2016.

\bibitem{elad2006image}
M.~Elad and M.~Aharon.
\newblock Image denoising via sparse and redundant representations over learned
  dictionaries.
\newblock {\em IEEE Transactions on Image processing}, 15(12):3736--3745, 2006.

\bibitem{georghiades2001few}
A.~S. Georghiades, P.~N. Belhumeur, and D.~J. Kriegman.
\newblock From few to many: Illumination cone models for face recognition under
  variable lighting and pose.
\newblock {\em IEEE transactions on pattern analysis and machine intelligence},
  23(6):643--660, 2001.

\bibitem{goodfellow2014explaining}
I.~J. Goodfellow, J.~Shlens, and C.~Szegedy.
\newblock Explaining and harnessing adversarial examples.
\newblock {\em International Conference on Learning Representations (ICLR)},
  2015.

\bibitem{Gordon88}
Y.~Gordon.
\newblock On milman's inequality and random subspaces which escape through a
  mesh in ℝ n.
\newblock In J.~Lindenstrauss and V.~D. Milman, editors, {\em Geometric Aspects
  of Functional Analysis}, pages 84--106, Berlin, Heidelberg, 1988. Springer
  Berlin Heidelberg.

\bibitem{grosse2012shift}
R.~Grosse, R.~Raina, H.~Kwong, and A.~Y. Ng.
\newblock Shift-invariance sparse coding for audio classification.
\newblock {\em arXiv preprint arXiv:1206.5241}, 2012.

\bibitem{he2015convolutional}
K.~He and J.~Sun.
\newblock Convolutional neural networks at constrained time cost.
\newblock In {\em Proceedings of the IEEE Conference on Computer Vision and
  Pattern Recognition}, pages 5353--5360, 2015.

\bibitem{he2016deep}
K.~He, X.~Zhang, S.~Ren, and J.~Sun.
\newblock Deep residual learning for image recognition.
\newblock In {\em Proceedings of the IEEE conference on computer vision and
  pattern recognition}, pages 770--778, 2016.

\bibitem{resnet}
K.~He, X.~Zhang, S.~Ren, and J.~Sun.
\newblock Deep residual learning for image recognition.
\newblock In {\em Proceedings of the IEEE conference on computer vision and
  pattern recognition}, pages 770--778, 2016.

\bibitem{hearst1998support}
M.~A. Hearst, S.~T. Dumais, E.~Osman, J.~Platt, and B.~Scholkopf.
\newblock Support vector machines.
\newblock {\em IEEE Intelligent Systems and their Applications}, 13(4):18--28,
  1998.

\bibitem{densenet}
G.~Huang, Z.~Liu, K.~Q. Weinberger, and L.~van~der Maaten.
\newblock Densely connected convolutional networks.
\newblock In {\em Proceedings of the IEEE conference on computer vision and
  pattern recognition}, volume~1, page~3, 2017.

\bibitem{jiang2013label}
Z.~Jiang, Z.~Lin, and L.~S. Davis.
\newblock Label consistent k-svd: Learning a discriminative dictionary for
  recognition.
\newblock {\em IEEE Transactions on Pattern Analysis and Machine Intelligence},
  35(11):2651--2664, 2013.

\bibitem{LCKSVD}
Z.~Jiang, Z.~Lin, and L.~S. Davis.
\newblock Label consistent k-svd: Learning a discriminative dictionary for
  recognition.
\newblock {\em IEEE Transactions on Pattern Analysis and Machine Intelligence},
  35(11):2651--2664, 2013.

\bibitem{kraskov2004estimating}
A.~Kraskov, H.~St{\"o}gbauer, and P.~Grassberger.
\newblock Estimating mutual information.
\newblock {\em Physical review E}, 69(6):066138, 2004.

\bibitem{lecun1998gradient}
Y.~LeCun, L.~Bottou, Y.~Bengio, and P.~Haffner.
\newblock Gradient-based learning applied to document recognition.
\newblock {\em Proceedings of the IEEE}, 86(11):2278--2324, 1998.

\bibitem{mahdizadehaghdam2017image}
S.~Mahdizadehaghdam, L.~Dai, H.~Krim, E.~Skau, and H.~Wang.
\newblock Image classification: A hierarchical dictionary learning approach.
\newblock In {\em Acoustics, Speech and Signal Processing (ICASSP), 2017 IEEE
  International Conference on}, pages 2597--2601. IEEE, 2017.

\bibitem{mairal2012task}
J.~Mairal, F.~Bach, and J.~Ponce.
\newblock Task-driven dictionary learning.
\newblock {\em IEEE Transactions on Pattern Analysis and Machine Intelligence},
  34(4):791--804, 2012.

\bibitem{Mairal:2009:ODL}
J.~Mairal, F.~Bach, J.~Ponce, and G.~Sapiro.
\newblock Online dictionary learning for sparse coding.
\newblock In {\em Proceedings of the 26th Annual International Conference on
  Machine Learning}, ICML '09, pages 689--696, New York, NY, USA, 2009. ACM.

\bibitem{mairal2009online}
J.~Mairal, F.~Bach, J.~Ponce, and G.~Sapiro.
\newblock Online dictionary learning for sparse coding.
\newblock In {\em Proceedings of the 26th annual international conference on
  machine learning}, pages 689--696. ACM, 2009.

\bibitem{mairal2014convolutional}
J.~Mairal, P.~Koniusz, Z.~Harchaoui, and C.~Schmid.
\newblock Convolutional kernel networks.
\newblock In {\em Advances in Neural Information Processing Systems}, pages
  2627--2635, 2014.

\bibitem{mairal2009supervised}
J.~Mairal, J.~Ponce, G.~Sapiro, A.~Zisserman, and F.~R. Bach.
\newblock Supervised dictionary learning.
\newblock In {\em Advances in neural information processing systems}, pages
  1033--1040, 2009.

\bibitem{Dezfooli2017}
S.~M. Moosavi-Dezfooli, A.~Fawzi, O.~Fawzi, and P.~Frossard.
\newblock Universal adversarial perturbations.
\newblock In {\em 2017 IEEE Conference on Computer Vision and Pattern
  Recognition (CVPR)}, pages 86--94, July 2017.

\bibitem{deepfool}
S.~M. Moosavi-Dezfooli, A.~Fawzi, and P.~Frossard.
\newblock Deepfool: A simple and accurate method to fool deep neural networks.
\newblock In {\em 2016 IEEE Conference on Computer Vision and Pattern
  Recognition (CVPR)}, pages 2574--2582, June 2016.

\bibitem{panahi2017universal}
A.~Panahi and B.~Hassibi.
\newblock A universal analysis of large-scale regularized least squares
  solutions.
\newblock In {\em Advances in Neural Information Processing Systems}, pages
  3384--3393, 2017.

\bibitem{paninski2003estimation}
L.~Paninski.
\newblock Estimation of entropy and mutual information.
\newblock {\em Neural computation}, 15(6):1191--1253, 2003.

\bibitem{shwartz2017opening}
R.~Shwartz-Ziv and N.~Tishby.
\newblock Opening the black box of deep neural networks via information.
\newblock {\em arXiv preprint arXiv:1703.00810}, 2017.

\bibitem{vgg}
K.~Simonyan and A.~Zisserman.
\newblock Very deep convolutional networks for large-scale image recognition.
\newblock {\em arXiv preprint arXiv:1409.1556}, 2014.

\bibitem{springenberg2014striving}
J.~T. Springenberg, A.~Dosovitskiy, T.~Brox, and M.~Riedmiller.
\newblock Striving for simplicity: The all convolutional net.
\newblock {\em arXiv preprint arXiv:1412.6806}, 2014.

\bibitem{hasibi2015}
C.~Thrampoulidis, A.~Panahi, D.~Guo, and B.~Hassibi.
\newblock Precise error analysis of the lasso.
\newblock In {\em 2015 IEEE International Conference on Acoustics, Speech and
  Signal Processing (ICASSP)}, pages 3467--3471, April 2015.

\bibitem{ulyanov2016texture}
D.~Ulyanov, V.~Lebedev, A.~Vedaldi, and V.~Lempitsky.
\newblock Texture networks: Feed-forward synthesis of textures and stylized
  images.
\newblock In {\em Int. Conf. on Machine Learning (ICML)}, 2016.

\bibitem{LLC}
J.~Wang, J.~Yang, K.~Yu, F.~Lv, T.~Huang, and Y.~Gong.
\newblock Locality-constrained linear coding for image classification.
\newblock In {\em Computer Vision and Pattern Recognition (CVPR), 2010 IEEE
  Conference on}, pages 3360--3367. IEEE, 2010.

\bibitem{wright2009robust}
J.~Wright, A.~Y. Yang, A.~Ganesh, S.~S. Sastry, and Y.~Ma.
\newblock Robust face recognition via sparse representation.
\newblock {\em IEEE transactions on pattern analysis and machine intelligence},
  31(2):210--227, 2009.

\bibitem{SRC}
J.~Wright, A.~Y. Yang, A.~Ganesh, S.~S. Sastry, and Y.~Ma.
\newblock Robust face recognition via sparse representation.
\newblock {\em IEEE transactions on pattern analysis and machine intelligence},
  31(2):210--227, 2009.

\bibitem{xu2016cloud}
M.~Xu, X.~Jia, M.~Pickering, and A.~J. Plaza.
\newblock Cloud removal based on sparse representation via multitemporal
  dictionary learning.
\newblock {\em IEEE Transactions on Geoscience and Remote Sensing},
  54(5):2998--3006, 2016.

\bibitem{zeiler2014visualizing}
M.~D. Zeiler and R.~Fergus.
\newblock Visualizing and understanding convolutional networks.
\newblock In {\em European conference on computer vision}, pages 818--833.
  Springer, 2014.

\bibitem{Zhang2016168}
D.~Zhang, P.~Liu, K.~Zhang, H.~Zhang, Q.~Wang, and X.~Jing.
\newblock Class relatedness oriented-discriminative dictionary learning for
  multiclass image classification.
\newblock {\em Pattern Recognition}, 59:168 -- 175, 2016.
\newblock Compositional Models and Structured Learning for Visual Recognition.

\bibitem{zhang2010discriminative}
Q.~Zhang and B.~Li.
\newblock Discriminative k-svd for dictionary learning in face recognition.
\newblock In {\em Computer Vision and Pattern Recognition (CVPR), 2010 IEEE
  Conference on}, pages 2691--2698. IEEE, 2010.

\end{thebibliography}




%








\end{document}